\documentclass[11pt]{article}
\usepackage[margin=1in]{geometry}
\usepackage[utf8]{inputenc}
\usepackage[T1]{fontenc}
\usepackage{amsmath,amssymb,amsfonts,amsthm}
\newtheorem{proposition}{Proposition}
\newtheorem{corollary}{Corollary}
\usepackage{graphicx}
\usepackage{booktabs}
\usepackage{xcolor}
\usepackage{colortbl}
\usepackage{hyperref}
\usepackage{lmodern}
\usepackage{microtype}
\usepackage{multirow}
\usepackage{natbib}
\setcitestyle{authoryear,round}

\hypersetup{colorlinks=true, linkcolor=blue, citecolor=blue, urlcolor=blue}

\newcommand{\dmodel}{d_{\text{model}}}

\title{Multiplicative Modulation for Adaptation}

\author{
  Hengshuai Yao\textsuperscript{1,2} \quad Xing Chen\textsuperscript{1} \quad Ahmed Murtadha\textsuperscript{1} \quad Guan Wang\textsuperscript{1} \\
  \textsuperscript{1}Sapient Intelligence \\
  \textsuperscript{2}Department of Computing Science, University of Alberta \\
  \texttt{hengshu1@ualberta.ca, raincchio@gmail.com,} \\
  \texttt{murtadha20@gmail.com, imonenext@gmail.com}
}

\begin{document}
\maketitle

\begin{abstract}
Adapting LLMs to new domains causes forgetting because standard methods (e.g., full fine-tuning, LoRA) inject new directions into the weight space. We show that forgetting is governed by one algebraic property: whether the update preserves the column span of the pretrained weight matrix (Proposition~\ref{prop:colspan}). We propose \textbf{GAIN}, the simplest multiplicative alternative ($W_{\text{new}} = S \cdot W$), which satisfies this by construction and can be absorbed into existing weights for zero inference cost. Across five models (774M to 70B) adapted sequentially over eight domains, GAIN \emph{improves} earlier-domain perplexity by $7$--$13\%$, while LoRA degrades it by $18$--$36\%$. GAIN matches replay-augmented LoRA without storing prior data and dominates EWC on the forgetting--adaptation Pareto front. While LoRA can only reduce forgetting by sacrificing in-domain adaptation, GAIN achieves both with no domain boundaries and no regularization. The principle generalises: (IA)$^3$, an independent multiplicative method, also improves earlier domains.
\end{abstract}

% ============================================================
\section{Introduction}

Large language models are pretrained on diverse web-scale corpora and then adapted to specialised domains. The standard view treats adaptation as a parameter-update problem: full fine-tuning updates all weights \citep{gururangan2020dontstop}; LoRA \citep{hu2022lora} adds a low-rank correction $W \!\leftarrow\! W + BA$. Both cause \emph{forgetting}---earlier-trained capabilities degrade as new ones are learned~\citep{biderman2024lora}---and the field treats this as an inevitable cost of capacity.

We show that, surprisingly, forgetting is due to a design choice. A wrapped projection $W \!\leftarrow\! f(W)$ preserves earlier-trained capabilities if $\mathrm{col}(f(W)) \subseteq \mathrm{col}(W)$ (Proposition~\ref{prop:colspan}). Diagonal scaling $W \!\leftarrow\! S \!\cdot\! W$ satisfies this by construction; additive correction $W \!\leftarrow\! W + BA$ does not, except in the degenerate $\mathrm{rank}(B)\!=\!0$ case. The difference for performance is sharp: across the experiments reported in this paper, every additive method we test degrades earlier-trained domains, while all three multiplicative methods improve them.

We propose \textbf{GAIN}---the simplest multiplicative instantiation---as a one-line replacement for additive PEFT that can be absorbed into existing weights for zero inference cost. GAIN learns a diagonal gain vector $S$ per attention output projection ($46$K parameters on GPT-2 Large, $131$K on Llama-3-8B); GAIN-FFN extends to the FFN output ($230$K, $590$K). The principle is inspired by \emph{gain modulation} in cortex \citep{salinas2000gain, andersen1985gain}: neurons adapt by scaling response strength without changing what they respond to, preserving the population code over time.

Our four contributions:
\begin{enumerate}
    \item \textbf{The colspan dichotomy.} Forgetting is determined by whether the adapted weight matrix stays within the pretrained output space. Multiplicative scaling preserves it; additive correction does not (Proposition~\ref{prop:colspan}). This formalises the ``intruder dimensions'' observation of \citet{biderman2024lora}; we verify it by direct subspace measurement (Appendix~\ref{sec:mechanistic}, Table~\ref{tab:mechanistic}): after sequential adaptation, LoRA's adapted weight retains only $0.53$ top-10 subspace overlap with the pretrained $W$, while GAIN-FFN preserves $0.91$.
    \item \textbf{Multiplicative preserves prior domains far better than additive.} On the eight-domain sequential protocol, GAIN-FFN improves earlier-trained domains by $7$--$21\%$ while LoRA degrades them by $1$--$57\%$ (three-seed standard deviation $\leq 0.5$ percentage points). Extending to nineteen domains, the gap persists: GAIN-FFN improves by $14.7\%$, the strongest LoRA variant by only $2.0\%$.
    \item \textbf{GAIN alone matches CL baselines augmented with replay, domain boundaries, or regularization.} GAIN-FFN matches Replay-LoRA's preservation without storing prior data, dominates EWC on the forgetting--adaptation Pareto front, and beats PackNet-LoRA by $17$ percentage points at matched parameter budget---using $3\!\times\!$ fewer trainable parameters than LoRA $r\!=\!8$ and requiring no domain-boundary knowledge.
    \item \textbf{The multiplicative principle, not GAIN's parameterisation, prevents forgetting.} Multiplicative methods trade a small ceiling on in-domain adaptation for stable preservation of prior capabilities. (IA)$^3$ \citep{liu2022ia3}, an independent multiplicative method that scales attention K/V and FFN gates, confirms this: it also improves earlier domains ($-14.2\%$).
\end{enumerate}

GAIN generalises to instruction-tuned models (Llama-3-8B-Instruct, Appendix~\ref{sec:llama-instruct}) and remains effective even on out-of-distribution domains (English-only GPT-2 Large adapted to German Wikipedia, Appendix~\ref{sec:ood-test}).

% ============================================================
\section{Methods}
\label{sec:method}

We propose two methods: GAIN scales the attention output projection, and GAIN-FFN extends this to the FFN. Both instantiate the multiplicative principle:
\begin{equation}
    W_{\text{new}} = S \cdot W
    \label{eq:principle}
\end{equation}
where $S$ is a diagonal matrix that scales rows of $W$.

\subsection{GAIN}

We instantiate Eq.~\ref{eq:principle} for the attention output projection $W_O$. A transformer with $L$ layers and $H$ heads computes:
\begin{equation}
    \text{output} = [\text{head}_1, \ldots, \text{head}_H] \, W_O
\end{equation}
where $W_O \in \mathbb{R}^{\dmodel \times \dmodel}$ projects concatenated head outputs into the residual stream. GAIN inserts a diagonal scaling matrix before $W_O$:
\begin{equation}
    \text{output} = [\text{head}_1, \ldots, \text{head}_H] \, S \, W_O, \quad S = \text{diag}(s_1, \ldots, s_{\dmodel})
\end{equation}
This gives $\dmodel$ parameters per layer, $\dmodel \times L$ total. For GPT-2 Large ($\dmodel\!=\!1280$, $L\!=\!36$), this is 46,080 parameters.

We freeze all pretrained weights and initialize $S = I$, so that training starts from the pretrained model. After training, $S$ is absorbed into the weights: $W_O \leftarrow S \cdot W_O$, adding zero inference cost.

\subsection{GAIN-FFN}

Eq.~\ref{eq:principle} scales rows of $W$ (left multiplication). The complementary form scales columns:
\begin{equation}
    W^{\text{new}} = W \cdot S
    \label{eq:right_mult}
\end{equation}
GAIN-FFN is an instance of Eq.~\ref{eq:right_mult} applied to the FFN. Each transformer layer contains a feed-forward network: the up-projection $W_{\text{up}}$ maps from $\dmodel$ to $d_{\text{ffn}}$ (typically $4\times$ larger), a nonlinearity is applied, and the down-projection $W_{\text{down}}$ maps back. GAIN-FFN scales the $d_{\text{ffn}}$-dimensional intermediate activations:
\begin{equation}
    \text{FFN}(h) = W_{\text{down}} \cdot S_{\text{ffn}} \cdot \sigma(W_{\text{up}} \cdot h), \quad S_{\text{ffn}} = \text{diag}(s_1^{\text{ffn}}, \ldots, s_{d_{\text{ffn}}}^{\text{ffn}})
\end{equation}
where $h$ is the hidden state from the attention sublayer. After training, this is absorbed as $W_{\text{down}} \leftarrow W_{\text{down}} \cdot S_{\text{ffn}}$. We scale the $d_{\text{ffn}}$-dimensional intermediate rather than the $\dmodel$-dimensional output, because each intermediate dimension corresponds to a learned feature in the FFN. GAIN-FFN adds $d_{\text{ffn}} \times L$ parameters (184K for GPT-2, where $d_{\text{ffn}}\!=\!5120$), bringing the total to 230K.

\begin{center}\small
\begin{tabular}{llrr}
\toprule
Method & Structure & Params/layer & Total (GPT-2) \\
\midrule
GAIN & $\text{diag}(\dmodel)$ on $W_O$ & $\dmodel = 1280$ & 46,080 \\
GAIN-FFN & $\text{diag}$ on $W_O + W_{\text{down}}$ & $\dmodel + d_{\text{ffn}}$ & 230,400 \\
\midrule
LoRA ($r\!=\!1$) & $W + BA$ (additive) & $2r\dmodel$ & 92,160 \\
LoRA ($r\!=\!8$) & $W + BA$ (additive) & $2r\dmodel$ & 737,280 \\
\bottomrule
\end{tabular}
\end{center}
\noindent Both GAIN variants are multiplicative (Eq.~\ref{eq:principle},~\ref{eq:right_mult}) and absorbed after training for zero inference cost. LoRA is additive ($W + BA$).

% ============================================================
\section{Multiplicative Modulation}
\label{sec:theory}

We analyze why multiplicative modulation preserves pretrained capabilities while additive methods do not.

In transformers, each layer's output is added to a shared hidden state (the residual stream). The output projection $W_O$ determines what each attention layer contributes. LoRA modifies $W_O$ additively:
\begin{equation}
    \text{LoRA:} \quad W_{\text{new}} = W_O + BA \quad \text{(additive)}
\end{equation}
The perturbation $BA$ is independent of $W_O$, so for input $h$, the output $(W_O + BA)h = W_Oh + BAh$ includes $BAh$---a term that can introduce entirely new directions. GAIN modifies weights multiplicatively:
\begin{align}
    \text{GAIN:} \quad W_O^{\text{new}} &= S \cdot W_O \\
    \text{GAIN-FFN:} \quad W_{\text{down}}^{\text{new}} &= W_{\text{down}} \cdot S_{\text{ffn}}
\end{align}

\begin{proposition}[Output Space Preservation]
\label{prop:colspan}
For diagonal $S$ with $s_i > 0$ and weight matrix $W$:
\begin{enumerate}
    \item \textbf{Left multiplication} (GAIN on $W_O$): $\text{rowspace}(S \cdot W) = \text{rowspace}(W)$
    \item \textbf{Right multiplication} (GAIN-FFN on $W_{\text{down}}$): $\text{colspan}(W \cdot S) = \text{colspan}(W)$
\end{enumerate}
That is, multiplicative modulation can only produce outputs within the pretrained model's output subspace.
\end{proposition}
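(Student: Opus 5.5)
The proof reduces both claims to one fact: multiplying a matrix by an invertible matrix on one side leaves the corresponding span unchanged. Since $S=\mathrm{diag}(s_1,\dots,s_n)$ with every $s_i>0$, $S$ is invertible with $S^{-1}=\mathrm{diag}(1/s_1,\dots,1/s_n)$. I would prove two mutual inclusions in each case, one for $S$ and one for $S^{-1}$.

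\textbf{Part 2 (right multiplication).} For any $x$, $(WS)x = W(Sx)$, so every vector in $\mathrm{colspan}(WS)$ lies in $\mathrm{colspan}(W)$. This inclusion needs no hypothesis on $S$. Conversely, $W = (WS)S^{-1}$, so the same argument gives $\mathrm{colspan}(W)\subseteq\mathrm{colspan}(WS)$. Concretely, the $j$-th column of $WS$ is $s_j w_j$, where $w_j$ is the $j$-th column of $W$. Because $s_j\neq 0$, the columns of $WS$ and of $W$ span the same space.

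\textbf{Part 1 (left multiplication).} Transpose Part 2: $\mathrm{rowspace}(SW)=\mathrm{colspan}(W^\top S^\top)=\mathrm{colspan}(W^\top S)=\mathrm{colspan}(W^\top)=\mathrm{rowspace}(W)$, using $S^\top=S$. Alternatively, argue directly: the $i$-th row of $SW$ is $s_i$ times the $i$-th row of $W$. I would also note that $\mathrm{rank}(SW)=\mathrm{rank}(W)$, and that positivity is stronger than needed, since $s_i\neq 0$ suffices for equality. Without invertibility, only the inclusion $\subseteq$ survives.

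\textbf{The main obstacle} is not the algebra but reconciling the statement with the paper's conventions, so that the final sentence ("outputs within the pretrained output subspace") follows. GAIN is written with a row-vector convention, $\text{output}=x\,S\,W_O$. Under that convention the outputs are combinations of the rows of $S W_O$, and Part 1 is exactly the statement that these outputs stay in the pretrained output space $\mathrm{rowspace}(W_O)$. GAIN-FFN uses a column-vector convention, $W_{\text{down}}S_{\text{ffn}}\sigma(\cdot)$. There the outputs lie in $\mathrm{colspan}(W_{\text{down}}S_{\text{ffn}})$, and Part 2 gives $\mathrm{colspan}(W_{\text{down}})$.

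I would state this correspondence explicitly, noting that in each case the span in question is the span of the vectors the layer writes into the residual stream. I would then contrast with LoRA: $\mathrm{colspan}(W+BA)\subseteq\mathrm{colspan}(W)$ fails whenever $\mathrm{colspan}(B)\not\subseteq\mathrm{colspan}(W)$ (assuming $A$ is nonzero). This shows the dichotomy claimed in the introduction.
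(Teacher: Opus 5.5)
Your proof is correct and is essentially the paper's. The paper observes that the $j$-th column of $W S$ is $s_j w_j$ and the $i$-th row of $S W$ is $s_i w_i^{\top}$, nonzero rescalings that leave the span unchanged; your two-inclusion argument via $S^{-1}$ and your reduction of Part 1 to Part 2 by transposition are a slightly more general packaging of that same fact. One correction to your LoRA aside, which lies outside the proposition: $\mathrm{colspan}(W+BA)\subseteq\mathrm{colspan}(W)$ fails exactly when $\mathrm{colspan}(BA)\not\subseteq\mathrm{colspan}(W)$, so for rank $r\ge 2$ you need something like $A$ of full row rank rather than merely $A\neq 0$ (counterexample: $W=\mathrm{diag}(1,0)$, $B=I$, $A=\mathrm{diag}(1,0)$, where the inclusion still holds).
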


\begin{proof}
(1) The $i$-th row of $S \cdot W$ is $s_i \cdot w_i^{\top}$, a nonzero scaling of the $i$-th row of $W$. The row space is unchanged.
(2) The $j$-th column of $W \cdot S$ is $s_j \cdot w_j$, a nonzero scaling of the $j$-th column of $W$. The column space is unchanged.
\end{proof}

\noindent This property holds for any positive diagonal $S$, regardless of learning rate or training dynamics. LoRA has no analogous constraint: the perturbation $BA$ is independent of $W$, so the output space of $W + BA$ can escape that of $W$.

\begin{corollary}[Subspace overlap as a measurable proxy for forgetting]
\label{cor:subspace}
Proposition~\ref{prop:colspan} implies that for a multiplicative diagonal map $f(W)\!=\!S\!\cdot\!W$ with all $s_i\!>\!0$, the top-$k$ right-singular-vector subspace of $f(W)$ coincides with that of $W$ (the singular vectors are unchanged; only singular values are rescaled). For an additive map $f(W)\!=\!W+BA$ with $\text{rank}(BA)\!>\!0$ and $\text{col}(BA)\!\not\subseteq\!\text{col}(W)$, the top-$k$ subspace overlap is strictly less than $1$, and the deficit measures the magnitude of Biderman et al.\ \citeyear{biderman2024lora}'s ``intruder dimensions.''
\end{corollary}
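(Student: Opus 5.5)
The plan is to reduce both halves of Corollary~\ref{cor:subspace} to statements about the subspaces spanned by \emph{all} singular vectors with nonzero singular value, where Proposition~\ref{prop:colspan} applies directly. For a top-$k$ subspace with $k$ below the rank, truncation interacts with how singular values are reordered, and I expect this to be the main obstacle. Throughout I take the overlap of two $k$-dimensional subspaces with orthogonal projectors $P,Q$ to be $\tfrac1k\|PQ\|_F^2$. This quantity equals $1$ iff the subspaces coincide.

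\emph{Multiplicative case.} Write the SVD $W=U\Sigma V^\top$ with $r=\mathrm{rank}(W)$. The right-singular vectors with nonzero singular value span $\mathrm{rowspace}(W)$. By part~(1) of Proposition~\ref{prop:colspan}, $\mathrm{rowspace}(SW)=\mathrm{rowspace}(W)$, and since $S$ is invertible, $\mathrm{rank}(SW)=r$. Hence for $k=r$ the top-$k$ right-singular subspaces of $SW$ and $W$ coincide, and the overlap is exactly $1$. The analogous statement for $WS$ uses part~(2) with left-singular vectors and column spans.

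For $k<r$ the claim ``singular vectors are unchanged'' is not true for a general positive diagonal $S$. A $2\times 2$ example with $W$ orthogonal and $S=\mathrm{diag}(2,1)$ shows that the right-singular vectors of $SW$ are the rows of $W$, not the original basis. Where the paper's wording says otherwise, I would therefore state this as a correction and prove two weaker facts. First, the conclusion holds exactly when $S=cI$, or more generally when $S$ commutes with $WW^\top$, because then $(SW)^\top SW=W^\top S^2W$ shares eigenvectors with $W^\top W$. Second, for $S$ near $I$ the top-$k$ subspace moves by at most $O(\|S-I\|\,\|W\|/\mathrm{gap}_k)$, where $\mathrm{gap}_k=\sigma_k-\sigma_{k+1}$. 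This follows from the Davis--Kahan $\sin\Theta$ theorem applied to $W^\top S^2 W - W^\top W$, and it keeps the subspace confined to $\mathrm{rowspace}(W)$ in any case. This perturbation bound, and making the paper's ``coincides'' precise, is the delicate step.

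\emph{Additive case.} First I would show that $\mathrm{col}(W+BA)\not\subseteq\mathrm{col}(W)$. If the inclusion held, then $BA=(W+BA)-W$ would have its column space inside $\mathrm{col}(W)$, contradicting the hypothesis. So some left-singular vector $u$ of $W+BA$ with nonzero singular value has a component orthogonal to $\mathrm{col}(W)$. Take $k\ge r'=\mathrm{rank}(W+BA)$ so that $u$ lies in the top-$k$ subspace; or, for smaller $k$, assume $u$ is among the top $k$, which is exactly the regime in which an intruder dimension is said to appear. Then the projector $P$ onto the top-$k$ subspace of $W+BA$ satisfies $\|P_W^{(k)}P\|_F^2<k$, because $\|P_W^{(k)}u\|<1$ for the unit vector $u$. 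Hence the overlap is strictly below $1$.

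Finally, the deficit $1-\tfrac1k\|P_W^{(k)}P\|_F^2=\tfrac1k\sum_i\|(I-P_W^{(k)})u_i\|^2$ is the averaged squared mass of the new singular directions outside the pretrained subspace. This identifies it with the ``intruder dimension'' magnitude of \citet{biderman2024lora}, which is an interpretive statement rather than something to prove.
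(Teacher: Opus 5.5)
The paper's only justification is the parenthetical ``the singular vectors are unchanged; only singular values are rescaled,'' and that is false for non-scalar $S$. Proposition~\ref{prop:colspan} controls only the full row space, i.e.\ your case $k=r$. The paper's own Table~\ref{tab:mechanistic} reports overlap $0.88$, not $1$, for GAIN. So you are right to abandon the paper's route, and your $k=r$ argument is correct. Your replacement claims, however, need repair.

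\textbf{Multiplicative half.} Your counterexample is degenerate. An orthogonal $W$ has all singular values equal, so its top-$k$ subspace is undefined and there is no ``original basis'' to compare with. Use instead $W=\begin{pmatrix}1&1\\0&1\end{pmatrix}$ with $S=\mathrm{diag}(2,1)$, where the top-$1$ overlap is $\approx 0.97$. Or use $W=\mathrm{diag}(2,1)$ with $S=\mathrm{diag}(1,3)$: the top right-singular vector jumps from $e_1$ to $e_2$ and the overlap is $0$.

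The second example also refutes your claim that commuting with $WW^\top$ suffices. There $W^\top S^2W$ and $W^\top W$ share an eigenbasis, but the rescaled values $d_i\sigma_i$ (with $Su_i=d_iu_i$) reorder across the $k$ boundary. This is exactly the obstacle you flagged at the outset and then dropped. You need the additional condition $\min_{i\le k}d_i\sigma_i>\max_{i>k}d_i\sigma_i$.

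Also, Davis--Kahan applied to $W^\top(S^2-I)W$ gives $\|S^2-I\|\,\|W\|^2/(\sigma_k^2-\sigma_{k+1}^2)$. That is weaker than your bound by a factor of order $\sigma_1/(\sigma_k+\sigma_{k+1})$. The bound $O(\|S-I\|\,\|W\|/\mathrm{gap}_k)$ is Wedin's $\sin\Theta$ theorem applied to the perturbation $(S-I)W$.

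\textbf{Additive half.} The step ``take $k\ge r'$'' breaks when $k>r$. The top-$k$ subspace of $W$ then contains zero-singular-value directions, so it need not lie in $\mathrm{col}(W)$, and $\|P_W^{(k)}u\|<1$ does not follow. Concretely, take $W=\mathrm{diag}(1,0)$ and $BA=\mathrm{diag}(0,\varepsilon)$ with $0<\varepsilon<1$. Every hypothesis holds, yet $W+BA=\mathrm{diag}(1,\varepsilon)$ has the same top-$k$ left and right singular subspaces as $W$ for $k=1,2$, so the overlap is $1$ for every $k$.

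So the additive half is false as stated, not merely in need of a caveat for small $k$. What you actually prove is the near-tautology: if $k\le r$ and some top-$k$ left-singular vector of $W+BA$ has a component outside $\mathrm{col}(W)$, the overlap is $<1$. You should present it explicitly as a corrected statement.

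You also switch sides silently. Your $u$ and $P$ are left-singular objects, forced by the column-space hypothesis. The corollary's first sentence and the measurement in Appendix~\ref{sec:mechanistic} use right-singular subspaces. Under the column hypothesis the right-singular overlap can be exactly $1$: for $W=e_1e_1^\top$ and $BA=\varepsilon e_2e_1^\top$, the row space is unchanged. Either phrase the additive claim for left-singular subspaces, or assume $\mathrm{row}(BA)\not\subseteq\mathrm{row}(W)$.

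\textbf{What survives in the paper's regime.} The paper works with a generic full-rank square $W_O$ and $k=10$. There $\mathrm{col}(W)$ is the whole space, so the additive hypothesis can never hold and your $k=r$ statement is trivial. The only informative piece left is the perturbation bound, and Wedin applies verbatim to $W+BA$ with $\|BA\|$ in place of $\|(S-I)W\|$. So nothing proved here separates multiplicative from additive maps at $k<r$.

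Your deficit identity is correct, but it counts rotation within $\mathrm{rowspace}(W)$ as well as genuine intruders. That is why GAIN shows a deficit of $0.12$. Equating the deficit with intruder dimensions is therefore not supported by anything you have proved.
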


\noindent Empirically on the sequential eight-domain protocol (Appendix~\ref{sec:mechanistic}, Table~\ref{tab:mechanistic}), the measured top-$10$ overlap is $\geq 0.88$ for multiplicative methods (GAIN, GAIN-FFN) and $\leq 0.56$ for additive methods (LoRA, DoRA). Lower overlap correlates monotonically with higher final forgetting, consistent with the subspace-preservation mechanism.

\citet{biderman2024lora} show that LoRA introduces ``intruder dimensions''---out-of-subspace singular vectors causally linked to forgetting. \citet{chen2024flatlora} show that LoRA minima can be sharp in full parameter space. GAIN sidesteps both: Proposition~\ref{prop:colspan} prevents intruder dimensions, and the cross-domain loss landscape remains flat (Appendix~\ref{sec:loss-landscape}).

Since Proposition~\ref{prop:colspan} holds at every layer, the guarantee composes through depth. For LoRA, intruder dimensions at early layers compound through subsequent layers.

\paragraph{Connection to neuroscience.} GAIN's principle has a direct analogue: \emph{gain modulation} in cortex \citep{salinas2000gain} scales neural responses multiplicatively, preserving stimulus selectivity while adapting amplitude. Equivalently, $h \!\cdot\! (S \!\cdot\! W_O) = (h \!\cdot\! S) \!\cdot\! W_O$: GAIN scales hidden activations rather than modifying the projection, exactly as gain fields scale neural firing rates without changing tuning curves (Appendix~\ref{sec:neuroscience}).

% ============================================================
\section{Single-Domain Adaptation}
\label{sec:experiments}

We first evaluate GAIN on single-domain adaptation: adapt a model to a new domain, then measure in-domain improvement and cross-domain damage. The sequential setting follows in \S\ref{sec:sequential}.

\textbf{Setup.} All experiments freeze the pretrained model and train only GAIN (or LoRA) parameters on a calibration set ($\sim$200K tokens, 5 epochs). We use PubMedQA medical abstracts as the primary adaptation domain because it represents a clear distributional shift from web pretraining data and has associated downstream benchmarks (MedQA, MMLU medical). We verify generality across additional domains in later sections. We measure forgetting via perplexity on held-out domains (LAMBADA, WikiText-103) and accuracy on general benchmarks (HellaSwag, ARC-Easy, PIQA, WinoGrande). Default learning rate is lr$=10^{-3}$.

We adapt GPT-2 (774M) on PubMedQA abstracts and measure both in-domain improvement (PubMed PPL) and cross-domain damage (WT-103 PPL, LAMBADA PPL, and four classification benchmarks). Table~\ref{tab:sweep} compares GAIN, GAIN-FFN, and LoRA across learning rates.

\begin{table}[t]
\centering
\caption{GPT-2 (774M) adapted on PubMedQA. PPL columns: \% change in perplexity ($\downarrow$ = better). Benchmark columns: accuracy change on general tasks. PubMed is in-domain; all others measure forgetting.}
\label{tab:sweep}
\resizebox{\textwidth}{!}{
\begin{tabular}{llr||rrr||rrrr}
\toprule
 & & & \multicolumn{3}{c||}{PPL $\Delta$\% ($\downarrow$ better)} & \multicolumn{4}{c}{Benchmark $\Delta$ (accuracy)} \\
Method & LR & Params & \textit{PubMed} & WT-103 & LAMB & ARC-E & HellaS & PIQA & WinoGr \\
\midrule
GAIN & $3\!\times\!10^{-4}$ & 46K & $-0.7$ & $+0.0$ & $-0.1$ & $+0.0$ & $+0.1$ & $+0.1$ & $-0.1$ \\
GAIN & $10^{-3}$ & 46K & $-2.0$ & $+0.1$ & $-0.4$ & $-0.1$ & $+0.1$ & $+0.1$ & $-0.1$ \\
\textbf{GAIN} & $\mathbf{3\!\times\!10^{-3}}$ & \textbf{46K} & $\mathbf{-4.2}$ & $\mathbf{+0.3}$ & $\mathbf{-0.9}$ & $-0.3$ & $+0.0$ & $+0.0$ & $-0.1$ \\
\midrule
GAIN-FFN & $10^{-3}$ & 230K & $-5.1$ & $+0.1$ & $-2.0$ & $-0.7$ & $-0.1$ & $+0.2$ & $+0.2$ \\
GAIN-FFN & $3\!\times\!10^{-3}$ & 230K & $-8.3$ & $+1.0$ & $-2.9$ & $-1.1$ & $-0.1$ & $+0.1$ & $+0.2$ \\
\midrule
LoRA & $10^{-4}$ & 92K & $-0.5$ & $+0.0$ & $-0.1$ & $+0.0$ & $+0.1$ & $+0.1$ & $+0.1$ \\
LoRA & $3\!\times\!10^{-4}$ & 92K & $-3.2$ & $+0.1$ & $-0.7$ & $-0.1$ & $+0.0$ & $+0.0$ & $-0.1$ \\
LoRA & $10^{-3}$ & 92K & $-7.4$ & $+1.1$ & $-3.0$ & $-1.2$ & $-0.4$ & $-0.1$ & $-0.2$ \\
LoRA & $10^{-2}$ & 92K & $-7.7$ & $\mathbf{+10.4}$ & $\mathbf{+7.1}$ & $\mathbf{-4.7}$ & $-0.7$ & $-0.8$ & $-0.2$ \\
\bottomrule
\end{tabular}}

{\small GAIN = $W_O$ scaling. GAIN-FFN = $W_O + W_{\text{down}}$ scaling. PPL: \% change. Benchmarks: accuracy $\Delta$.}
\end{table}

GAIN adapts with near-zero forgetting across its entire LR range. At lr$=3\!\times\!10^{-3}$, it achieves $-4.2\%$ PubMed PPL with all benchmarks within $\pm 0.3$, using \textbf{46K parameters}. GAIN-FFN adapts more aggressively ($-5.1\%$ at lr$=10^{-3}$, $-8.3\%$ at lr$=3\!\times\!10^{-3}$) with \textbf{230K parameters}, approaching LoRA's adaptation strength while maintaining near-zero forgetting. LoRA at its safe LR ($3\!\times\!10^{-4}$, 92K params) also adapts with minimal forgetting ($-3.2\%$), but damage grows with LR: at lr$=10^{-3}$, ARC-Easy drops by $1.2$ and LAMBADA PPL rises $3.0\%$; at lr$=10^{-2}$, ARC-Easy drops $4.7$ and WT-103 PPL rises $10.4\%$.

Single-domain adaptation results are consistent across model scales (GPT-2 774M through Llama-70B) and domains (WikiText, PG-19, PubMedQA); full tables are in the appendix. We also compare with full fine-tuning (catastrophic at 200K tokens) and (IA)$^3$ \citep{liu2022ia3} (avoids forgetting, consistent with our multiplicative framework); see appendix.

We also tested domain-specific classification tasks: GAIN improves MedQA accuracy by $+4.3$ and financial sentiment by $+12.0$ points without degrading general benchmarks, confirming that GAIN's benefits extend beyond perplexity. Full classification results are in the appendix. Additional comparisons: scaling across models (Appendix~\ref{sec:appendix-scaling}), full fine-tuning (Appendix~\ref{sec:full-ft}), and (IA)$^3$ (Appendix~\ref{sec:ia3-comparison}).

% ============================================================
\section{Sequential Multi-Domain Adaptation}
\label{sec:sequential}

In practice, models are adapted to new domains over time---a medical deployment followed by legal, then financial. Each new adaptation risks erasing earlier ones, a tension known as the stability-plasticity dilemma \citep{kirkpatrick2017overcoming}.

We emphasize that GAIN is not a continual learning method---it uses no replay, regularization, or task-specific modules. The absence of forgetting is a structural consequence of multiplicative modulation (Proposition~\ref{prop:colspan}), not an engineered solution to the stability-plasticity dilemma.

\subsection{Sequential Adaptation Across 8 Domains}

We train on eight domains sequentially (Medical $\to$ WikiText $\to$ Financial $\to$ PG-19 $\to$ MedQA $\to$ LAMBADA $\to$ SST-2 $\to$ HellaSwag), with 200K tokens and 200 steps $\times$ 5 epochs per domain. After training on each domain, we evaluate PPL on \emph{all} domains. Table~\ref{tab:continual} shows the full adaptation matrix. The \textcolor{gray}{upper triangle (gray)} measures forward interference---whether training on domain $i$ damages not-yet-trained domains. The \textbf{lower triangle} measures whether earlier-trained domains are preserved or forgotten. The \colorbox{blue!15}{diagonal} measures in-domain adaptation.

\begin{table}[h]
\centering\small
\caption{Sequential adaptation on GPT-2 Large (774M, 200K tokens/domain). Entries = \% PPL change vs.\ pretrained. \colorbox{blue!15}{Diagonal} = in-domain; lower triangle = preservation of earlier domains; \textcolor{gray}{upper triangle} = forward interference. See Tables~\ref{tab:continual_lr3e3}--\ref{tab:continual_lora_safe} for other learning rates and Table~\ref{tab:continual_mistral} for Mistral-7B.}
\label{tab:continual}
\begin{tabular}{l|rrrrrrrr}
\toprule
\multicolumn{9}{c}{\textbf{GAIN} (lr$=10^{-3}$, \% PPL change vs.\ pretrained baseline)} \\
\midrule
After training & Med & WT & Fin & PG & MdQ & LB & SST & HS \\
\midrule
+Med & \cellcolor{blue!15}$-2.9$ & \textcolor{gray}{$+0.0$} & \textcolor{gray}{$-0.0$} & \textcolor{gray}{$-1.3$} & \textcolor{gray}{$-0.3$} & \textcolor{gray}{$-0.6$} & \textcolor{gray}{$-0.0$} & \textcolor{gray}{$-0.2$} \\
+WT & $-3.0$ & \cellcolor{blue!15}$-11.0$ & \textcolor{gray}{$-0.1$} & \textcolor{gray}{$-3.8$} & \textcolor{gray}{$-0.2$} & \textcolor{gray}{$-1.5$} & \textcolor{gray}{$-2.1$} & \textcolor{gray}{$-0.3$} \\
+Fin & $-2.9$ & $-11.1$ & \cellcolor{blue!15}$-5.6$ & \textcolor{gray}{$-2.9$} & \textcolor{gray}{$+0.0$} & \textcolor{gray}{$-2.6$} & \textcolor{gray}{$-1.6$} & \textcolor{gray}{$-0.4$} \\
+PG & $-2.8$ & $-11.0$ & $-5.7$ & \cellcolor{blue!15}$-24.0$ & \textcolor{gray}{$-0.1$} & \textcolor{gray}{$-4.3$} & \textcolor{gray}{$-1.7$} & \textcolor{gray}{$-0.2$} \\
+MdQ & $-2.4$ & $-10.6$ & $-5.4$ & $-23.9$ & \cellcolor{blue!15}$-8.0$ & \textcolor{gray}{$-5.0$} & \textcolor{gray}{$-1.4$} & \textcolor{gray}{$-0.3$} \\
+LB & $-2.2$ & $-10.0$ & $-5.2$ & $-23.8$ & $-8.1$ & \cellcolor{blue!15}$-14.2$ & \textcolor{gray}{$-0.7$} & \textcolor{gray}{$+0.5$} \\
+SST & $-2.1$ & $-10.6$ & $-4.8$ & $-23.6$ & $-7.8$ & $-14.0$ & \cellcolor{blue!15}$-11.9$ & \textcolor{gray}{$+0.5$} \\
+HS & $-2.2$ & $-10.3$ & $-4.6$ & $-23.5$ & $-7.5$ & $-13.5$ & $-11.7$ & \cellcolor{blue!15}$-8.5$ \\
\midrule
\multicolumn{9}{c}{\textbf{LoRA} (lr$=10^{-3}$, \% PPL change vs.\ pretrained baseline)} \\
\midrule
+Med & \cellcolor{blue!15}$-9.8$ & \textcolor{gray}{$+1.0$} & \textcolor{gray}{$+2.5$} & \textcolor{gray}{$-16.2$} & \textcolor{gray}{$+3.1$} & \textcolor{gray}{$-2.4$} & \textcolor{gray}{$+2.5$} & \textcolor{gray}{$+0.6$} \\
+WT & $+32.5$ & \cellcolor{blue!15}$-25.2$ & \textcolor{gray}{$+16.3$} & \textcolor{gray}{$-4.7$} & \textcolor{gray}{$+16.2$} & \textcolor{gray}{$+18.0$} & \textcolor{gray}{$-1.2$} & \textcolor{gray}{$+8.7$} \\
+Fin & $+7.2$ & $-8.4$ & \cellcolor{blue!15}$-13.4$ & \textcolor{gray}{$-9.1$} & \textcolor{gray}{$+3.9$} & \textcolor{gray}{$-4.2$} & \textcolor{gray}{$+3.3$} & \textcolor{gray}{$+3.0$} \\
+PG & $+4.7$ & $-9.8$ & $-4.2$ & \cellcolor{blue!15}$-25.5$ & \textcolor{gray}{$+3.7$} & \textcolor{gray}{$-0.9$} & \textcolor{gray}{$-0.3$} & \textcolor{gray}{$+3.4$} \\
+MdQ & $+15.0$ & $+1.2$ & $+6.4$ & $-20.9$ & \cellcolor{blue!15}$-23.9$ & \textcolor{gray}{$-0.9$} & \textcolor{gray}{$+3.6$} & \textcolor{gray}{$+6.9$} \\
+LB & $+14.5$ & $+14.5$ & $+15.5$ & $-15.4$ & $-6.8$ & \cellcolor{blue!15}$-28.1$ & \textcolor{gray}{$+12.3$} & \textcolor{gray}{$+10.0$} \\
+SST & $+30.7$ & $+23.6$ & $+11.6$ & $-4.1$ & $+0.2$ & $+5.6$ & \cellcolor{blue!15}$-24.2$ & \textcolor{gray}{$+24.5$} \\
+HS & $+22.6$ & $+31.0$ & $\mathbf{+37.4}$ & $-4.5$ & $+15.0$ & $+22.5$ & $+1.6$ & \cellcolor{blue!15}$-20.8$ \\
\bottomrule
\end{tabular}
\end{table}

\paragraph{Upper triangle: why is interference near zero?} GAIN's gray entries are uniformly small---training on one domain barely affects predictions on others. This may seem too good to be true. We verify it is not a bug: GAIN's learned scaling factors satisfy $s_{\min} = 0.94$ and $s_{\max} = 1.06$ (GPT-2 Large, lr$=10^{-3}$), with $98.7\%$ of dimensions perturbed by less than $\pm 5\%$. Per-token analysis (Figure~\ref{fig:token_adj}) confirms that each token's cross-entropy changes by $\sigma \approx 0.12$ nats, with improvements and worsenings nearly canceling. LoRA's distribution is $3\times$ wider ($\sigma \approx 0.30$ nats) with heavy tails that produce the large positive entries in its upper triangle ($+18\%$, $+16\%$).

\paragraph{Lower triangle: earlier domains are preserved.} GAIN preserves each domain's improvement through all subsequent trainings. PG-19 improves by $24.0\%$ at step 4 and retains $23.5\%$ after 4 more domains. Every lower-triangle entry shows improvement: adaptations accumulate rather than interfere. LoRA's improvements are steadily eroded: Medical improves by $9.8\%$ at step 1 but degrades by $22.6\%$ after eight domains---the original gain is not just lost but reversed. Even at LoRA's safe learning rate (lr$=3\!\times\!10^{-4}$), degradation still accumulates to $10.2\%$ after eight domains (Table~\ref{tab:continual_lora_safe}).

Forgetting and interference are both manifestations of \emph{feature departure}: additive perturbation $W + BA$ introduces directions outside the pretrained output space, distorting the shared representation for all domains, whether trained before or after. Proposition~\ref{prop:colspan} prevents this and bounds the perturbation: for diagonal $S$ with $s_i > 0$, the singular values satisfy $s_{\min} \cdot \sigma_i \leq \sigma_i(SW) \leq s_{\max} \cdot \sigma_i$. No new output directions enter, and existing ones change by at most a factor of $s_{\max}/s_{\min} = 1.13$. LoRA has no such bound.

\paragraph{Why does positive transfer occur?} GAIN not only avoids forgetting---it \emph{improves} earlier domains. To understand this, we train GAIN-FFN independently on each of five domains (five separate models, each starting from pretrained) and compare the resulting scaling factors. Most adaptation is domain-specific: cross-domain correlations are near zero ($r = 0.00$--$0.14$). However, $20$--$40\%$ of the variance is shared across all domains---all five agree on which dimensions to suppress and amplify. We hypothesize this shared component corrects pretrained feature weightings that are suboptimal for downstream text generally; training on any domain applies this correction, benefiting all others.

\paragraph{Causal verification.} We decompose each domain's scaling update into shared and domain-specific components. Removing the shared component halves positive transfer. Keeping \emph{only} the shared component retains $81\%$ of it. The shared component is thus both necessary and largely sufficient for positive transfer.

\begin{figure}[t]
\centering
\includegraphics[width=0.95\textwidth]{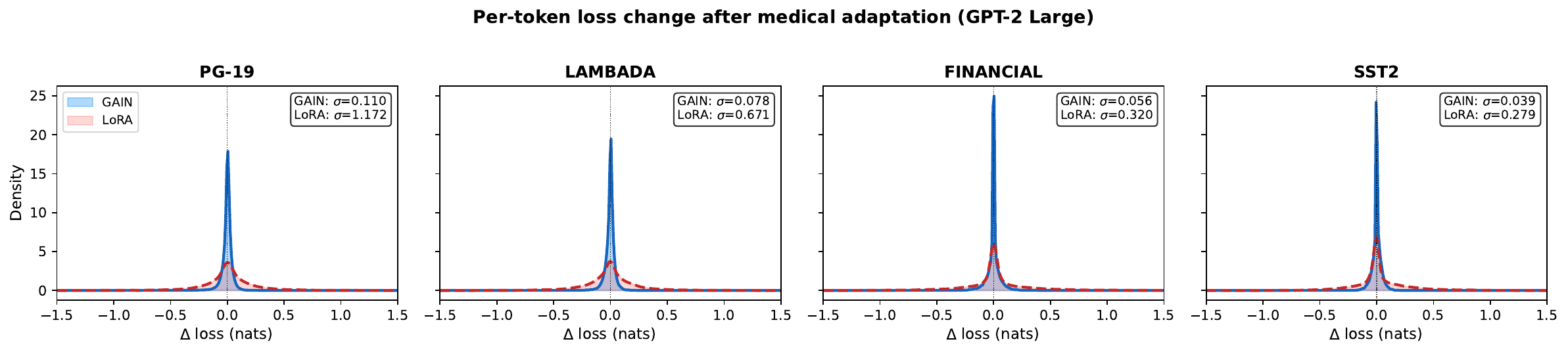}
\caption{Per-token loss change on four unrelated domains after medical adaptation (GPT-2 Large). GAIN ($\sigma \approx 0.12$) is $3\times$ narrower than LoRA ($\sigma \approx 0.30$).}
\label{fig:token_adj}
\end{figure}

\subsection{Robustness}

We verify the main result under extensive variation of hyperparameters, data, baselines, and model scales. The result is robust across:
\begin{itemize}\setlength{\itemsep}{1pt}\setlength{\parskip}{0pt}
    \item \textbf{Learning rates:} GAIN preserves from lr$=10^{-3}$ to $3\!\times\!10^{-3}$; LoRA forgets at every LR including its safe $3\!\times\!10^{-4}$ (Table~\ref{tab:continual_lora_safe}).
    \item \textbf{Domain orderings:} GAIN-FFN $13.3 \pm 1.1\%$ improvement across eight random orderings on GPT-2, $8.5 \pm 0.7\%$ on Mistral-7B.
    \item \textbf{Random seeds:} GAIN-FFN $-13.2 \pm 0.2\%$; LoRA $r\!=\!8$: $+11.9 \pm 0.5\%$ (three seeds each).
    \item \textbf{Data scale:} At $25\!\times$ the main-text token budget, GAIN is unchanged while LoRA worsens to $+15.7\%$ (Appendix~\ref{sec:larger-data}).
    \item \textbf{19 distinct domains:} GAIN-FFN's gap over LoRA $r\!=\!8$ remains $\geq 5\!\times$ (Appendix~\ref{sec:extended-19}).
    \item \textbf{OOD:} English-only GPT-2 adapted to German Wikipedia---GAIN-FFN exhibits $45\!\times$ less forward interference (Appendix~\ref{sec:ood-test}).
    \item \textbf{24 sequential adaptations:} Scaling factors remain bounded ($s_{\min}\!=\!0.43$, $s_{\max}\!=\!1.57$; Appendix~\ref{sec:long-seq-trajectory}).
    \item \textbf{LoRA at matched params:} LoRA $r\!=\!2$ (184K, comparable to GAIN-FFN's 230K) still degrades by $14.5\%$.
    \item \textbf{CL baselines:} GAIN-FFN matches Replay-LoRA, dominates EWC on the Pareto front, beats PackNet by $17$ points (Appendices~\ref{sec:replay-comparison}, \ref{sec:ewc-comparison}, \ref{sec:packnet-comparison}).
    \item \textbf{Model scales} (five models, 774M to 70B):
\end{itemize}

\begin{center}\small
\begin{tabular}{llrr}
\toprule
Model & Family & GAIN-FFN & LoRA \\
\midrule
GPT-2 Large (774M) & OpenAI & $\mathbf{12.9\%}$ \textbf{better} & $17.9\%$ worse \\
Mistral-7B & Mistral AI & $\mathbf{7.0\%}$ \textbf{better} & $28.0\%$ worse \\
Qwen2.5-7B & Alibaba & $\mathbf{6.6\%}$ \textbf{better} & $36.1\%$ worse \\
Llama-2-13B & Meta & $\mathbf{8.8\%}$ \textbf{better} & $24.0\%$ worse \\
Llama-2-70B & Meta & $\mathbf{8.1\%}$ \textbf{better} & $19.5\%$ worse \\
\bottomrule
\multicolumn{4}{l}{\small Avg PPL change on earlier-trained domains after eight sequential adaptations.}
\end{tabular}
\end{center}

Across five models from four families (774M to 70B), GAIN-FFN improves earlier-trained domains while LoRA degrades them. The gap widens on instruction-tuned models: on Llama-3-8B-Instruct, GAIN-FFN improves by $20.6\%$ while LoRA degrades by $56.9\%$ (Appendix~\ref{sec:llama-instruct}). Direct subspace measurement confirms the mechanism: LoRA's adapted weight retains only $0.53$ top-10 subspace overlap with the pretrained $W$, while GAIN-FFN preserves $0.91$ (Appendix~\ref{sec:mechanistic}).

\subsection{LoRA's forgetting-adaptation tradeoff}

LoRA's forgetting can be reduced via conservative learning rates or L2 regularization ($\ell_{\text{total}} = \ell_{\text{LM}} + \lambda \sum_i (p_i - p_i^{\text{prev}})^2$, where $p^{\text{prev}}$ are parameters saved after the previous domain). The principled extension---weighting that L2 penalty by the diagonal Fisher (EWC, \citealp{kirkpatrick2017overcoming})---traces out the same kind of frontier (Appendix~\ref{sec:ewc-comparison}, Table~\ref{tab:ewc}). But this always sacrifices in-domain adaptation (Figure~\ref{fig:tradeoff}). L2, EWC, and replay all require \emph{domain boundaries} (to snapshot parameters, Fisher, or buffer-size policies); GAIN needs no such signal.

\begin{figure}[t]
\centering
\includegraphics[width=0.55\textwidth]{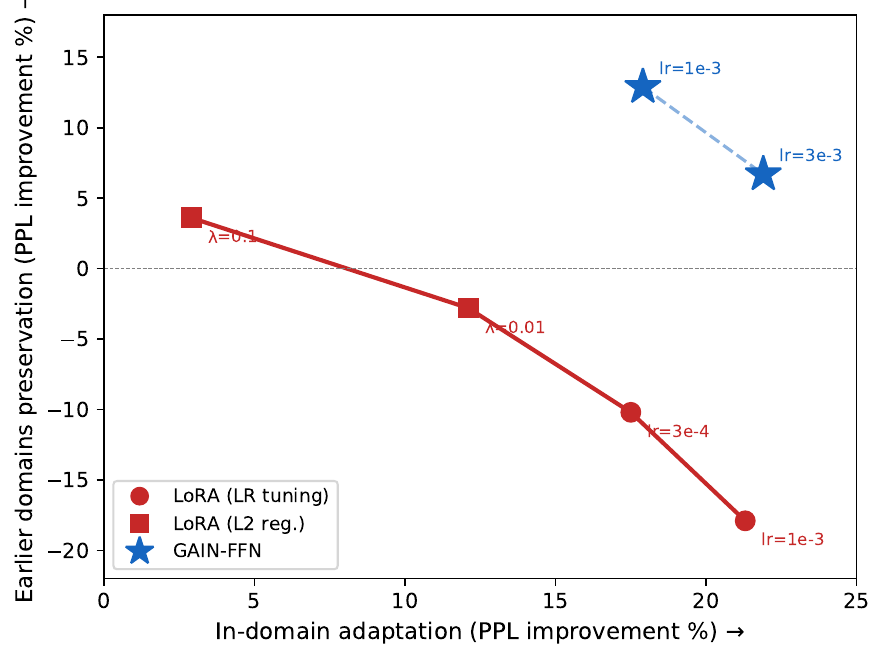}
\caption{LoRA's forgetting-adaptation tradeoff. Red points are LoRA with different learning rates (lr) and L2 regularization strengths ($\lambda$, where $\ell = \ell_{\text{LM}} + \lambda \|p - p^{\text{prev}}\|^2$). Higher $\lambda$ reduces forgetting but sacrifices adaptation. GAIN-FFN (blue star) operates outside this tradeoff curve at lr $\leq 3\!\times\!10^{-3}$.}
\label{fig:tradeoff}
\end{figure}

\subsection{GAIN-FFN breaks the tradeoff}

Table~\ref{tab:continual} shows GAIN with $W_O$ scaling only (46K params), which adapts conservatively. Extending to also scale the FFN output (GAIN-FFN, 230K params) yields much stronger in-domain adaptation while maintaining positive transfer:

\begin{center}\small
\begin{tabular}{lrrrr}
\toprule
Method & Params & Steps & In-domain $\downarrow$ & Earlier domains \\
\midrule
GAIN ($W_O$) & 46K & 200 & $-10.8\%$ & $10.5\%$ better \\
\textbf{GAIN-FFN} & \textbf{230K} & \textbf{200} & $\mathbf{-17.9\%}$ & $\mathbf{12.9\%}$ \textbf{better} \\
\midrule
LoRA ($r\!=\!1$, lr$=10^{-3}$) & 92K & 200 & $-21.4\%$ & $17.9\%$ worse \\
LoRA ($r\!=\!2$, lr$=10^{-3}$) & 184K & 200 & $-22.6\%$ & $14.5\%$ worse \\
LoRA ($r\!=\!1$, lr$=3\!\times\!10^{-4}$) & 92K & 1000 & $-17.6\%$ & $7.2\%$ worse \\
\bottomrule
\multicolumn{5}{l}{\small In-domain: avg diagonal PPL change ($\downarrow$ = better adaptation). Earlier domains:} \\
\multicolumn{5}{l}{\small avg PPL change on previously trained domains. GAIN-FFN: $13.3 \pm 1.1\%$ over eight orderings.}
\end{tabular}
\end{center}

GAIN-FFN operates above LoRA's Pareto frontier (Figure~\ref{fig:tradeoff}): at 200 steps, it achieves $17.9\%$ in-domain improvement, comparable to LoRA's safe-LR result ($17.6\%$) which requires $5\times$ more training steps (1000 vs 200). Yet GAIN-FFN improves earlier domains by $12.9\%$ while LoRA degrades them by $7.2\%$---even at LoRA's safest configuration. LoRA $r\!=\!2$ (184K params, comparable to GAIN-FFN's 230K) still degrades by $14.5\%$, confirming that the issue is additive vs multiplicative, not parameter count. We further verify this for the standard transformer-LoRA ranks ($r\!=\!8$, $r\!=\!16$) and target sets (QKVO) in Appendix~\ref{sec:lora_standard_ranks} (Table~\ref{tab:lora_ranks}): no rank or target combination tested yields positive transfer, and QKVO targets in fact \emph{worsen} forgetting (up to $+27.4\%$ at $r\!=\!16$). The multiplicative guarantee (Proposition~\ref{prop:colspan}) holds for both $W_O$ and $W_{\text{down}}$ independently, so extending to more weight matrices does not compromise the structural protection against forgetting.

\subsection{Downstream benchmarks after sequential adaptation}

Does sequential adaptation damage general capabilities? We train GAIN-FFN and LoRA sequentially on seven domains (excluding HellaSwag to avoid benchmark contamination), then evaluate on seven benchmarks not seen during training:

\begin{table}[h]
\centering\small
\caption{Benchmark accuracy change after seven sequential domain adaptations (acc\_norm). Same hyperparameters across all models (lr$=10^{-3}$, 200 steps, 5 epochs per domain). Benchmarks are not in the training set.}
\label{tab:downstream}
\resizebox{\textwidth}{!}{
\begin{tabular}{l|rrr|rrr|rrr|rrr|rrr}
\toprule
 & \multicolumn{3}{c|}{GPT-2 (774M)} & \multicolumn{3}{c|}{Mistral-7B} & \multicolumn{3}{c|}{Qwen2.5-7B} & \multicolumn{3}{c|}{Llama-13B} & \multicolumn{3}{c}{Llama-70B} \\
 & Base & GAIN & LoRA & Base & GAIN & LoRA & Base & GAIN & LoRA & Base & GAIN & LoRA & Base & GAIN & LoRA \\
\midrule
ARC-E & $46.6$ & $-1.4$ & $-1.9$ & $79.5$ & $+1.5$ & $\mathbf{-10.7}$ & $77.4$ & $\mathbf{+3.8}$ & $-0.2$ & $76.4$ & $+0.5$ & $+1.5$ & $79.5$ & $\mathbf{+1.4}$ & $+0.2$ \\
ARC-C & $25.1$ & $-1.0$ & $-0.1$ & $54.3$ & $+0.3$ & $\mathbf{-10.0}$ & $51.2$ & $\mathbf{+3.1}$ & $-0.1$ & $49.0$ & $+0.0$ & $+2.5$ & $57.0$ & $\mathbf{+1.4}$ & $-1.0$ \\
BoolQ & $60.5$ & $-2.1$ & $\mathbf{-7.8}$ & $84.3$ & $-1.2$ & $\mathbf{-9.2}$ & $84.3$ & $-0.8$ & $\mathbf{-14.9}$ & $82.5$ & $-1.4$ & $-2.3$ & $85.3$ & $-2.5$ & $\mathbf{-14.6}$ \\
HellaS & $45.3$ & $-0.9$ & $-2.1$ & $81.2$ & $+0.1$ & $\mathbf{-4.1}$ & $78.9$ & $+0.4$ & $-0.9$ & $79.7$ & $+0.9$ & $\mathbf{-5.3}$ & $84.3$ & $\mathbf{+0.7}$ & $\mathbf{-7.1}$ \\
OBQA & $31.2$ & $-1.8$ & $-2.2$ & $46.2$ & $+0.0$ & $\mathbf{-3.0}$ & $47.4$ & $-1.0$ & $\mathbf{-2.8}$ & $45.6$ & $+0.4$ & $\mathbf{-3.2}$ & $49.0$ & $-0.2$ & $\mathbf{-10.6}$ \\
PIQA & $69.2$ & $-1.3$ & $\mathbf{-2.6}$ & $82.8$ & $-0.1$ & $\mathbf{-5.2}$ & $79.9$ & $+0.2$ & $-1.5$ & $80.6$ & $+0.0$ & $-1.7$ & $82.5$ & $-0.2$ & $\mathbf{-5.8}$ \\
WinoGr & $55.3$ & $+0.4$ & $+0.3$ & $75.1$ & $-0.4$ & $\mathbf{-3.4}$ & $72.5$ & $+0.7$ & $\mathbf{-2.8}$ & $72.2$ & $+1.0$ & $-2.6$ & $80.3$ & $-1.1$ & $\mathbf{-9.1}$ \\
\bottomrule
\end{tabular}}
\end{table}

On GPT-2, GAIN-FFN's changes are within $\pm 2\%$; LoRA's BoolQ drops $7.8\%$. On larger models, the pattern is more dramatic: GAIN-FFN \emph{improves} multiple benchmarks (Qwen ARC-Easy $+3.8\%$, ARC-Challenge $+3.1\%$; Llama-70B ARC-Easy $+1.4\%$, HellaSwag $+0.7\%$), while LoRA damages most benchmarks by $2$--$15\%$. BoolQ is particularly vulnerable to LoRA: $-14.9\%$ on Qwen. The pattern holds from 774M to 70B with identical hyperparameters (lr$=10^{-3}$, 200 steps, 5 epochs). General capabilities are preserved---and often enhanced---by multiplicative adaptation.

% ============================================================
\section{Conclusion}

We show that domain adaptation in pretrained LLMs should be multiplicative, not additive. Scaling existing representations preserves model capabilities; adding new directions causes forgetting. The theoretical basis is Proposition~\ref{prop:colspan}: multiplicative modulation stays within the pretrained output space, bounding perturbation to a factor of $s_{\max}/s_{\min}$.

The strongest evidence comes from sequential domain adaptation: across eight domains, five models from four families (774M to 70B), and multiple random domain orderings, GAIN-FFN \emph{improves} earlier-trained domains by $7$--$13\%$ while LoRA degrades them by $18$--$36\%$. Downstream benchmarks confirm: after seven sequential adaptations on Mistral-7B, GAIN-FFN improves ARC-Easy by $1.5\%$ while LoRA degrades it by $10.7\%$. Forgetting and forward interference are both caused by feature departure---additive methods leaving the output space---and multiplicative modulation prevents both by construction.

\textbf{Limitations.}

\emph{Capacity ceiling.} GAIN's diagonal scaling can only reweight existing pretrained features---it cannot synthesise genuinely new ones. For domains that are far out-of-distribution for the pretrained model, GAIN will eventually hit a capacity ceiling. We probe this boundary by adapting the English-only-pretrained GPT-2 Large to German Wikipedia (Appendix~\ref{sec:ood-test}): GAIN-FFN still improves in-domain German PPL by $7.4\%$, only $3$ points behind LoRA, suggesting the capacity ceiling is not reached for German. For genuinely new modalities (e.g., adapting an English text model to an unseen language with disjoint vocabulary, or to structured/tabular data with no pretraining signal), GAIN may truly plateau and additive methods may become necessary; this more extreme regime remains empirically untested in our paper.

\emph{Domain diversity.} Our nineteen evaluation domains are all English text drawn from sources broadly similar in structure to common pretraining corpora (news, scientific articles, code, dialogue, reviews); the dichotomy may attenuate when the pretrained representation space is genuinely insufficient for the new domain.

\emph{Sequence depth.} We test up to 24 sequential adaptations; scaling-factor drift (Appendix~\ref{sec:long-seq-trajectory}) is bounded ($s_{\min}\!=\!0.43$, $s_{\max}\!=\!1.57$ at step 24) but very long sequences ($\gg 24$) may require periodic re-normalisation or lower learning rates.

\emph{Theory--practice gap.} Proposition~\ref{prop:colspan} is a per-layer linear-algebraic statement; deep transformers compose nonlinear layers, so the colspan guarantee carries over to the network output empirically rather than provably. Our experiments confirm the per-layer guarantee suffices to prevent forgetting in practice, but a theorem covering the full nonlinear stack remains open.

Within these scope conditions, our results suggest that adaptation should scale existing representations rather than adding new ones.

% ============================================================
\bibliography{brainstorm}

\appendix

\section{Related Work}

LoRA and variants \citep{hu2022lora, dettmers2023qlora, liu2024dora} add low-rank perturbations to weight matrices. We show these additive methods live on a forgetting--adaptation tradeoff curve that multiplicative methods avoid. \citet{biderman2024lora} identify ``intruder dimensions'' in LoRA that interfere with pretrained capabilities---our Proposition~\ref{prop:colspan} shows GAIN avoids these by construction. \citet{chen2024flatlora} show that LoRA minima can be deceptively sharp in full parameter space, causing forgetting.

(IA)$^3$ \citep{liu2022ia3} scales keys, values, and FFN activations multiplicatively for few-shot classification. HiRA \citep{zhang2025hira} uses Hadamard products of pretrained weights with low-rank matrices. SVFT \citep{lingam2024svft} scales singular values of pretrained weights. PiCa \citep{wang2025pica} projects gradients onto the pretrained column space. All are multiplicative but none studies forgetting or sequential adaptation. We confirm that (IA)$^3$ also exhibits positive transfer in the sequential setting ($-14.2\%$, comparable to GAIN-FFN's $-12.9\%$), validating that the multiplicative principle---not GAIN's specific parameterization---is the key mechanism (Appendix~\ref{sec:ia3-comparison}). \textbf{What is new beyond prior multiplicative PEFT.} The mathematical fact that diagonal scaling preserves a column space is elementary; what is new here is (a) the formal characterisation of forgetting via colspan invariance (Proposition~\ref{prop:colspan}, building on the empirical intruder-dimensions observation of \citet{biderman2024lora}), (b) the empirical demonstration that this dichotomy predicts the gap between every additive and every multiplicative adaptation method we test, including unrelated ones like (IA)$^3$, and (c) GAIN-FFN as the simplest sufficient parameterisation---scaling only the attention output and FFN-output projections rather than internal activations as in (IA)$^3$ or singular values as in SVFT.

Continual learning methods address forgetting through replay buffers \citep{rolnick2019experience}, regularization \citep{kirkpatrick2017overcoming}, or architecture partitioning \citep{mallya2018packnet}. GAIN is not a continual learning method---its protection is structural (Proposition~\ref{prop:colspan}), not engineered.

\section{Sequential Adaptation on Mistral-7B}

Table~\ref{tab:continual_mistral} replicates the sequential adaptation experiment (Table~\ref{tab:continual}) on Mistral-7B with domain order WikiText $\to$ Medical $\to$ Financial $\to$ PG-19 $\to$ MedQA $\to$ LAMBADA $\to$ SST-2 $\to$ HellaSwag. The pattern is consistent: GAIN preserves all earlier domains while LoRA accumulates catastrophic forgetting.

\begin{table}[h]
\centering
\caption{Sequential adaptation on Mistral-7B (200K tokens per domain). Same format as Table~\ref{tab:continual}.}
\label{tab:continual_mistral}
\begin{tabular}{l|rrrrrrrr}
\toprule
\multicolumn{9}{c}{\textbf{GAIN} (lr$=10^{-3}$, \% PPL change vs.\ pretrained baseline)} \\
\midrule
After training & WT & Med & Fin & PG & MdQ & LB & SST & HS \\
\midrule
+WT & \cellcolor{blue!15}$-23.1$ \\
+Med & $-23.2$ & \cellcolor{blue!15}$-1.2$ \\
+Fin & $-23.1$ & $-1.1$ & \cellcolor{blue!15}$-5.0$ \\
+PG & $-22.9$ & $-1.1$ & $-5.0$ & \cellcolor{blue!15}$-0.3$ \\
+MdQ & $-22.7$ & $-1.1$ & $-4.9$ & $-0.2$ & \cellcolor{blue!15}$-4.6$ \\
+LB & $-22.6$ & $-1.0$ & $-5.0$ & $+0.0$ & $-4.5$ & \cellcolor{blue!15}$-6.5$ \\
+SST & $-22.4$ & $-0.9$ & $-4.9$ & $+0.2$ & $-4.4$ & $-6.4$ & \cellcolor{blue!15}$-6.9$ \\
+HS & $-22.1$ & $-0.8$ & $-4.7$ & $+0.3$ & $-4.2$ & $-6.2$ & $-6.9$ & \cellcolor{blue!15}$-9.3$ \\
\midrule
\multicolumn{9}{c}{\textbf{LoRA} (lr$=10^{-3}$, \% PPL change vs.\ pretrained baseline)} \\
\midrule
+WT & \cellcolor{blue!15}$-30.9$ \\
+Med & $-10.8$ & \cellcolor{blue!15}$+0.1$ \\
+Fin & $-1.7$ & $+6.3$ & \cellcolor{blue!15}$-7.8$ \\
+PG & $+5.7$ & $+6.7$ & $+1.3$ & \cellcolor{blue!15}$+11.2$ \\
+MdQ & $+10.9$ & $+9.7$ & $+7.4$ & $+16.1$ & \cellcolor{blue!15}$-8.7$ \\
+LB & $+15.3$ & $+11.0$ & $+13.7$ & $+17.1$ & $+7.4$ & \cellcolor{blue!15}$-8.6$ \\
+SST & $+43.6$ & $+20.0$ & $+48.0$ & $+39.6$ & $+17.9$ & $\mathbf{+78.5}$ & \cellcolor{blue!15}$-15.4$ \\
+HS & $+44.4$ & $+21.8$ & $\mathbf{+50.6}$ & $+52.0$ & $+19.3$ & $+57.6$ & $+18.6$ & \cellcolor{blue!15}$-19.0$ \\
\bottomrule
\end{tabular}
\end{table}

\section{Sequential Adaptation at Higher Learning Rate}

Table~\ref{tab:continual_lr3e3} shows GAIN at lr$=3\!\times\!10^{-3}$ on GPT-2 Large. In-domain adaptation is nearly $2\times$ stronger than lr$=10^{-3}$ (e.g., WikiText $-15.8\%$ vs $-11.0\%$), while earlier domains still improve by $11.4\%$ on average. Minor forward interference appears on later domains (HellaSwag $+4.2\%$ in the upper triangle) and the medical adaptation erodes from $-5.7\%$ to $-0.7\%$. This is explained by the wider scaling range: at lr$=3\!\times\!10^{-3}$, $s_{\min} = 0.83$ and $s_{\max} = 1.19$ (vs.\ $0.94$--$1.06$ at lr$=10^{-3}$), with only $55.6\%$ of dimensions within $[0.95, 1.05]$ (vs.\ $98.7\%$). The perturbation bound is looser, but still bounded---all scaling factors remain within $[0.8, 1.2]$, far tighter than LoRA's unbounded perturbation.

\begin{table}[h]
\centering\small
\caption{Sequential adaptation on GPT-2 Large at lr$=3\!\times\!10^{-3}$ (GAIN). Same format as Table~\ref{tab:continual}. Stronger adaptation with minor interference on later domains.}
\label{tab:continual_lr3e3}
\begin{tabular}{l|rrrrrrrr}
\toprule
\multicolumn{9}{c}{\textbf{GAIN} (lr$=3\!\times\!10^{-3}$, \% PPL change vs.\ pretrained baseline)} \\
\midrule
After training & Med & WT & Fin & PG & MdQ & LB & SST & HS \\
\midrule
+Med & \cellcolor{blue!15}$-5.7$ & \textcolor{gray}{$+0.2$} & \textcolor{gray}{$+0.0$} & \textcolor{gray}{$-3.5$} & \textcolor{gray}{$-0.3$} & \textcolor{gray}{$-1.3$} & \textcolor{gray}{$+0.1$} & \textcolor{gray}{$-0.2$} \\
+WT & $-4.9$ & \cellcolor{blue!15}$-15.8$ & \textcolor{gray}{$+0.5$} & \textcolor{gray}{$-6.4$} & \textcolor{gray}{$+0.2$} & \textcolor{gray}{$-1.6$} & \textcolor{gray}{$-2.7$} & \textcolor{gray}{$+0.2$} \\
+Fin & $-3.9$ & $-15.3$ & \cellcolor{blue!15}$-8.6$ & \textcolor{gray}{$-4.4$} & \textcolor{gray}{$+0.7$} & \textcolor{gray}{$-3.6$} & \textcolor{gray}{$-1.0$} & \textcolor{gray}{$+0.5$} \\
+PG & $-3.5$ & $-14.5$ & $-8.2$ & \cellcolor{blue!15}$-25.0$ & \textcolor{gray}{$+0.5$} & \textcolor{gray}{$-5.0$} & \textcolor{gray}{$-1.0$} & \textcolor{gray}{$+0.8$} \\
+MdQ & $-2.1$ & $-13.4$ & $-7.2$ & $-24.2$ & \cellcolor{blue!15}$-14.2$ & \textcolor{gray}{$-5.5$} & \textcolor{gray}{$-0.2$} & \textcolor{gray}{$+1.5$} \\
+LB & $-1.3$ & $-11.7$ & $-6.0$ & $-23.0$ & $-13.3$ & \cellcolor{blue!15}$-20.4$ & \textcolor{gray}{$+1.2$} & \textcolor{gray}{$+3.8$} \\
+SST & $-0.6$ & $-11.6$ & $-4.5$ & $-22.0$ & $-12.2$ & $-19.3$ & \cellcolor{blue!15}$-17.1$ & \textcolor{gray}{$+4.2$} \\
+HS & $-0.7$ & $-10.5$ & $-3.9$ & $-21.0$ & $-10.7$ & $-17.3$ & $-16.0$ & \cellcolor{blue!15}$-13.8$ \\
\bottomrule
\end{tabular}
\end{table}

\section{LoRA at Safe Learning Rate}

Table~\ref{tab:continual_lora_safe} shows LoRA at lr$=3\!\times\!10^{-4}$ (the safe LR from Table~\ref{tab:sweep} where single-domain forgetting is near zero). Even at this conservative LR, sequential adaptation accumulates $+10.2\%$ average forgetting after eight domains. LoRA's forgetting in sequential settings is not a learning rate tuning issue---it is structural.

\begin{table}[h]
\centering
\caption{Sequential adaptation: LoRA at safe lr$=3\!\times\!10^{-4}$ on GPT-2 Large. Same format as Table~\ref{tab:continual}. Even at the safe LR, forgetting accumulates to $+10.2\%$.}
\label{tab:continual_lora_safe}
\begin{tabular}{l|rrrrrrrr}
\toprule
\multicolumn{9}{c}{\textbf{LoRA} (lr$=3\!\times\!10^{-4}$, \% PPL change vs.\ pretrained baseline)} \\
\midrule
After training & Med & WT & Fin & PG & MdQ & LB & SST & HS \\
\midrule
+Med & \cellcolor{blue!15}$-5.6$ & \textcolor{gray}{$+0.2$} & \textcolor{gray}{$+0.1$} & \textcolor{gray}{$-3.5$} & \textcolor{gray}{$-0.3$} & \textcolor{gray}{$-1.3$} & \textcolor{gray}{$+0.2$} & \textcolor{gray}{$-0.3$} \\
+WT & $+3.8$ & \cellcolor{blue!15}$-19.7$ & \textcolor{gray}{$+2.4$} & \textcolor{gray}{$-5.6$} & \textcolor{gray}{$+4.7$} & \textcolor{gray}{$+2.8$} & \textcolor{gray}{$-2.5$} & \textcolor{gray}{$+2.4$} \\
+Fin & $+3.0$ & $-13.9$ & \cellcolor{blue!15}$-9.1$ & \textcolor{gray}{$-12.6$} & \textcolor{gray}{$+2.5$} & \textcolor{gray}{$-4.9$} & \textcolor{gray}{$+0.6$} & \textcolor{gray}{$+1.1$} \\
+PG & $+1.5$ & $-11.7$ & $-6.8$ & \cellcolor{blue!15}$-25.5$ & \textcolor{gray}{$+1.9$} & \textcolor{gray}{$-5.4$} & \textcolor{gray}{$-1.2$} & \textcolor{gray}{$+1.4$} \\
+MdQ & $+9.3$ & $-5.1$ & $-1.0$ & $-21.9$ & \cellcolor{blue!15}$-17.3$ & \textcolor{gray}{$-3.8$} & \textcolor{gray}{$+1.4$} & \textcolor{gray}{$+2.7$} \\
+LB & $+5.4$ & $+1.3$ & $+4.3$ & $-19.2$ & $-7.1$ & \cellcolor{blue!15}$-24.6$ & \textcolor{gray}{$+8.9$} & \textcolor{gray}{$+4.9$} \\
+SST & $+10.9$ & $+5.7$ & $+9.8$ & $-6.9$ & $+2.4$ & $-7.1$ & \cellcolor{blue!15}$-21.3$ & \textcolor{gray}{$+12.0$} \\
+HS & $+15.9$ & $+8.5$ & $+26.4$ & $-2.0$ & $+16.9$ & $+3.2$ & $-3.9$ & \cellcolor{blue!15}$-17.0$ \\
\bottomrule
\end{tabular}
\end{table}

\section{LoRA at Standard Ranks and QKVO Targets}
\label{sec:lora_standard_ranks}

Standard practice trains LoRA with rank $8$ or $16$, often on all four attention projections ($Q$, $K$, $V$, $O$). Table~\ref{tab:continual} reports LoRA at $r\!=\!1$; Table~\ref{tab:lora_ranks} extends this to $r\!=\!8$ and $r\!=\!16$ on $W_O$ and on the QKVO target set. We reuse the sequential eight-domain protocol of Section~\ref{sec:sequential}, with the default domain order of the released script (WT $\to$ Med $\to$ Fin $\to$ PG $\to$ MdQ $\to$ LB $\to$ SST $\to$ HS; numbers within Table~\ref{tab:lora_ranks} are directly comparable).

\begin{table}[h]
\centering\small
\caption{LoRA at standard ranks and QKVO target sets, on GPT-2 Large. Same protocol as Table~\ref{tab:continual} ($5$ epochs, $200$ steps, $200$K tokens/domain, lr$=10^{-3}$ unless noted). \textit{In-domain}: mean PPL change on the diagonal ($\downarrow$ = better adaptation). \textit{Final forgetting}: mean PPL change on the seven earlier-trained domains after the last adaptation (positive $=$ forgetting).}
\label{tab:lora_ranks}
\begin{tabular}{llrrrr}
\toprule
Method & Targets & LR & Params & In-domain $\downarrow$ & Final forgetting \\
\midrule
\textbf{GAIN ($W_O$)} & --- & $10^{-3}$ & \textbf{46K} & $-10.2\%$ & $\mathbf{-9.8\%}$ \\
\midrule
LoRA $r\!=\!1$ & $W_O$ & $10^{-3}$ & 92K & $-21.3\%$ & $+12.2\%$ \\
LoRA $r\!=\!8$ & $W_O$ & $10^{-3}$ & 737K & $-24.3\%$ & $+11.7\%$ \\
LoRA $r\!=\!16$ & $W_O$ & $10^{-3}$ & 1.47M & $-24.5\%$ & $+13.3\%$ \\
LoRA $r\!=\!8$ & $W_O$ & $3\!\times\!10^{-4}$ & 737K & $-21.5\%$ & $+14.0\%$ \\
\midrule
LoRA $r\!=\!8$ & QKVO & $10^{-3}$ & 2.21M & $-23.8\%$ & $+21.6\%$ \\
LoRA $r\!=\!16$ & QKVO & $10^{-3}$ & 4.42M & $-21.4\%$ & $+27.4\%$ \\
LoRA $r\!=\!8$ & QKVO & $3\!\times\!10^{-4}$ & 2.21M & $-24.0\%$ & $+15.8\%$ \\
\bottomrule
\end{tabular}
\end{table}

Three findings:
\begin{itemize}
\item \textbf{Forgetting is not a low-rank artefact.} On $W_O$, LoRA's forgetting is non-monotonic in rank: $+12.2\%$ at $r\!=\!1$, $+11.7\%$ at $r\!=\!8$, $+13.3\%$ at $r\!=\!16$. Higher rank does not buy stability.
\item \textbf{Wider targets worsen forgetting.} Applying LoRA to QKVO rather than $W_O$ alone increases forgetting at every rank and LR: $+21.6\%$ vs $+11.7\%$ at $r\!=\!8$, $+27.4\%$ vs $+13.3\%$ at $r\!=\!16$. The safe LR ($3\!\times\!10^{-4}$) narrows but does not close the gap ($+15.8\%$).
\item \textbf{Parameter budget does not explain the gap.} GAIN ($46$K parameters) retains $9.8\%$ improvement on earlier domains after eight adaptations. LoRA $r\!=\!16$ on QKVO ($4.42$M parameters, $96\times$ more) forgets $27.4\%$.
\end{itemize}

Within the transformer-LoRA design space, no rank or target combination we tested yields positive transfer. The mechanism is unchanged by rank or target selection: additive $W + BA$ introduces directions outside the pretrained output subspace, violating the invariance guaranteed by multiplicative modulation (Proposition~\ref{prop:colspan}). Increasing rank or widening targets increases the off-subspace component rather than eliminating it.

\section{Loss Landscape Analysis}
\label{sec:loss-landscape}

Figure~\ref{fig:landscape} linearly interpolates between pretrained ($\alpha\!=\!0$) and adapted ($\alpha\!=\!1$) weights on Mistral-7B: $W(\alpha) = (1-\alpha) W_{\text{pre}} + \alpha W_{\text{adapted}}$. Both methods reduce in-domain PPL, but GAIN's cross-domain loss stays flat (LAMBADA PPL $13.90$--$13.92$) while LoRA's rises steeply ($13.90 \to 16.21$). The cross-domain loss variation across the full interpolation path is $0.02$ for GAIN vs $2.31$ for LoRA ($115\!\times$ larger).

\begin{figure}[h]
\centering
\includegraphics[width=0.95\textwidth]{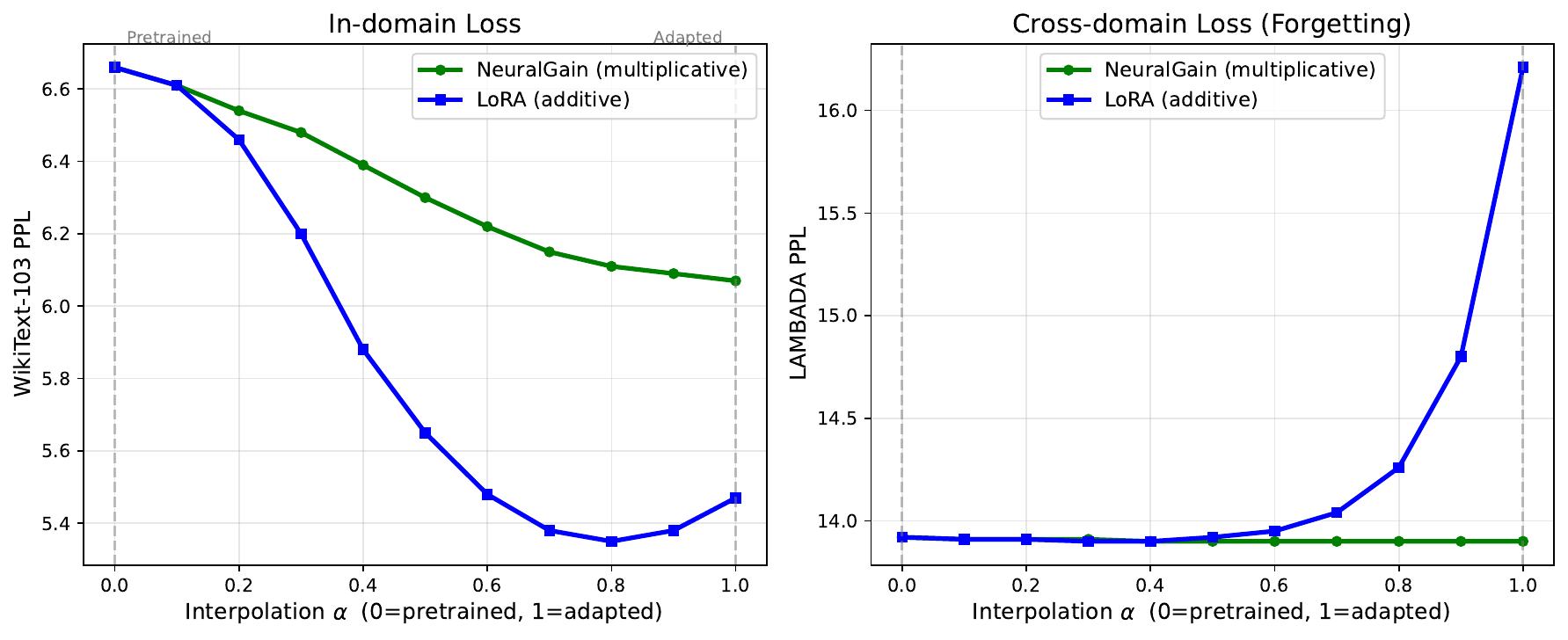}
\caption{Loss landscape interpolation on Mistral-7B. Left: in-domain PPL decreases for both. Right: GAIN's cross-domain loss is flat; LoRA's rises steeply.}
\label{fig:landscape}
\end{figure}

This flatness extends to multi-domain combinations. We train GAIN and LoRA separately on WikiText and Medical text, then evaluate on a 2D grid combining both adaptation directions. GAIN's domain directions are nearly orthogonal (cosine similarity $0.002$) vs.\ LoRA ($0.033$, $16\times$ higher). Figure~\ref{fig:landscape2d} shows the result: GAIN's cross-domain loss is constant regardless of how domains are combined (range $0.007$ in log PPL), while LoRA's varies $885\times$ more.

\begin{figure}[h]
\centering
\includegraphics[width=0.95\textwidth]{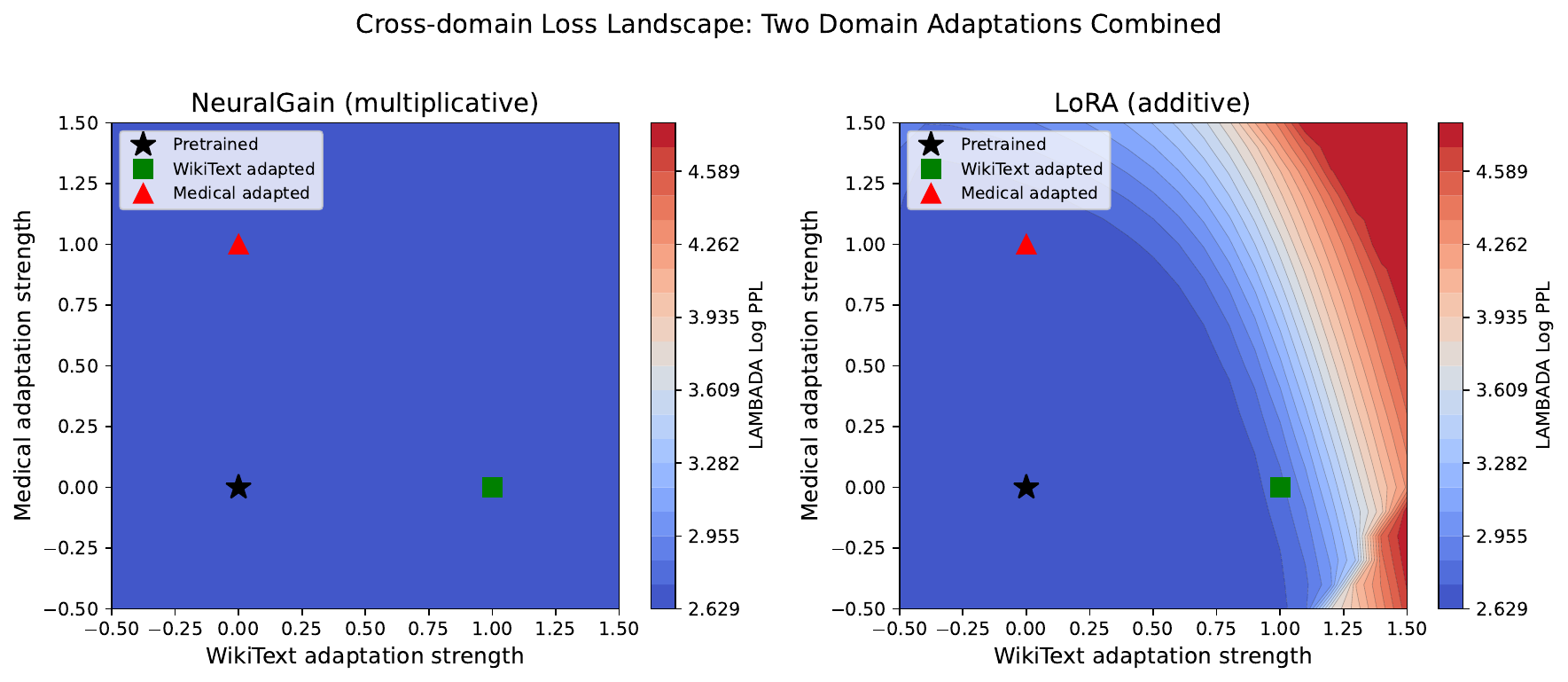}
\caption{Cross-domain loss when WikiText and Medical adaptations are combined. \textbf{Left}: GAIN---uniform blue; any combination preserves LAMBADA. \textbf{Right}: LoRA---red regions appear when adaptations are combined.}
\label{fig:landscape2d}
\end{figure}

\section{Single-Domain Adaptation Across Model Scales}
\label{sec:appendix-scaling}

Table~\ref{tab:scaling} shows single-domain adaptation (PubMedQA) across four models from two families (774M to 70B). GAIN adapts with near-zero cross-domain damage at every scale. LoRA at lr$=10^{-3}$ damages LAMBADA increasingly with model size; even at its safe LR ($3\!\times\!10^{-4}$), cross-domain damage remains.

\begin{table}[h]
\centering
\caption{Single-domain adaptation (PubMedQA) across model scales. PPL columns: \% change ($\downarrow$ = better). PubMed is in-domain; LAMBADA measures forgetting. Benchmarks: accuracy (4 tasks averaged).}
\label{tab:scaling}
\resizebox{\textwidth}{!}{
\begin{tabular}{llr||rr||rrrr}
\toprule
 & & & \multicolumn{2}{c||}{PPL $\Delta$\%} & \multicolumn{4}{c}{Benchmark $\Delta$ (accuracy)} \\
Model & Method & Params & \textit{PubMed} & LAMB & ARC-E & HellaS & PIQA & WinoGr \\
\midrule
\multirow{3}{*}{Mistral-7B} & GAIN & 131K & $-3.1$ & $-0.4$ & $-0.3$ & $-0.2$ & $-0.5$ & $+0.0$ \\
 & LoRA (lr$=10^{-3}$) & 262K & $+5.4$ & $+2.7$ & $+0.6$ & $+0.3$ & $-0.3$ & $-0.1$ \\
 & LoRA (lr$=3\!\times\!10^{-4}$) & 262K & $-3.7$ & $-0.0$ & $+0.4$ & $+0.1$ & $-0.3$ & $-0.1$ \\
\midrule
\multirow{3}{*}{Llama-2-13B} & GAIN & 205K & $-2.4$ & $-0.3$ & $-0.1$ & $+0.4$ & $+0.0$ & $+0.5$ \\
 & LoRA (lr$=10^{-3}$) & 410K & $+1.4$ & $-0.2$ & $-1.3$ & $+0.9$ & $+0.2$ & $-0.5$ \\
 & LoRA (lr$=3\!\times\!10^{-4}$) & 410K & $-4.0$ & $-0.2$ & $-0.7$ & $+0.3$ & $+0.0$ & $-0.1$ \\
\midrule
\multirow{3}{*}{Llama-2-70B} & GAIN & 655K & $-1.3$ & $-0.1$ & $+0.0$ & $+0.1$ & $+0.2$ & $+0.8$ \\
 & LoRA (lr$=10^{-3}$) & 1.3M & $+8.3$ & $+0.5$ & $-0.9$ & $+0.9$ & $+0.0$ & $-0.3$ \\
 & LoRA (lr$=3\!\times\!10^{-4}$) & 1.3M & $-4.1$ & $-0.5$ & $-0.6$ & $+0.3$ & $+0.0$ & $+0.9$ \\
\bottomrule
\end{tabular}}
\end{table}

GAIN achieves in-domain adaptation ($-1.3\%$ to $-3.1\%$) with near-zero cross-domain damage (LAMBADA within $\pm 0.4\%$) and benchmark changes within $\pm 0.5$ at every scale from 7B to 70B. LoRA at its default LR damages LAMBADA by $+0.5\%$ to $+2.7\%$ and PubMed by $+1.4\%$ to $+8.3\%$ on two of three models; at safe LR, forgetting is controlled but adaptation is modest.

\textbf{Domain generality.} On Llama-2-70B adapted to PG-19 (fiction, a different domain from PubMedQA):

\begin{center}
\begin{tabular}{llrr}
\toprule
Method & LR & PG-19 $\Delta$PPL & LAMBADA $\Delta$PPL \\
\midrule
GAIN & $10^{-3}$ & $-0.9\%$ & $-0.1\%$ \\
LoRA & $10^{-3}$ & $+17.5\%$ & $+20.9\%$ \\
\bottomrule
\end{tabular}
\end{center}

The same pattern holds on a different domain: GAIN adapts with zero forgetting while LoRA causes catastrophic cross-domain damage ($+20.9\%$ LAMBADA).

\section{GAIN vs LoRA at Matched Learning Rates}
\label{sec:lr-matched}

Table~\ref{tab:sweep} compares GAIN and LoRA at their respective default LRs. To ensure a fair comparison at comparable aggressiveness, we also test GAIN at lr$=10^{-2}$ and LoRA at lr$=3\!\times\!10^{-3}$ (both more aggressive than their defaults):

\begin{table}[h]
\centering
\caption{GAIN (lr$=10^{-2}$) vs LoRA (lr$=3\!\times\!10^{-3}$) on WikiText adaptation. GAIN adapts strongly with zero forgetting at every scale. LoRA diverges on Mistral-7B and Llama-2-70B, and causes $+10\%$ forgetting on Llama-2-13B.}
\label{tab:lr-matched}
\begin{tabular}{lrrrr}
\toprule
 & \multicolumn{2}{c}{GAIN (lr$=10^{-2}$)} & \multicolumn{2}{c}{LoRA (lr$=3\!\times\!10^{-3}$)} \\
\cmidrule(lr){2-3} \cmidrule(lr){4-5}
Model & WikiText & LAMBADA & WikiText & LAMBADA \\
\midrule
Mistral-7B & $\mathbf{-15.9\%}$ & $+0.2\%$ & $+213\%$ & $+180\%$ \\
Llama-2-13B & $\mathbf{-25.7\%}$ & $-0.1\%$ & $-28.4\%$ & $+10.0\%$ \\
Llama-2-70B & $\mathbf{-18.0\%}$ & $+0.1\%$ & $+12.3\%$ & $+35.6\%$ \\
\bottomrule
\end{tabular}
\end{table}

LoRA at lr$=3\!\times\!10^{-3}$ diverges catastrophically on two of three models. On Llama-2-13B where LoRA does not diverge, it achieves slightly better in-domain adaptation ($-28.4\%$ vs $-25.7\%$) but at the cost of $+10\%$ LAMBADA forgetting. GAIN at lr$=10^{-2}$ is uniformly safe: strong adaptation with LAMBADA within $\pm 0.2\%$ across all scales.

\section{Domain-Specific Classification Tasks}
\label{sec:appendix-classification}

GAIN's benefits extend beyond perplexity to classification accuracy. We train on domain-specific data and evaluate on domain-specific benchmarks.

\textbf{MedQA.} GAIN trained on PubMedQA (classification loss, lr$=3\!\times\!10^{-3}$) improves MedQA 4-option accuracy from 51.3 to 55.6 ($+4.3$ points) on Mistral-7B, with all general benchmarks within $\pm 0.8$:

\begin{center}\small
\begin{tabular}{lrrrrrr}
\toprule
Method & MedQA & ARC-E & HellaS & LAMB & PIQA & WinoGr \\
\midrule
Baseline & 51.3 & 79.5 & 81.2 & 74.9 & 82.8 & 75.1 \\
GAIN & $\mathbf{55.6}$ & $+0.8$ & $-0.2$ & $-0.1$ & $-0.1$ & $-0.3$ \\
LoRA (lr$=3\!\times\!10^{-4}$) & 55.9 & $-1.5$ & $-0.5$ & $-0.3$ & $-1.5$ & $-0.9$ \\
LoRA (lr$=10^{-3}$) & 28.6 & $-11.5$ & $-1.0$ & $+0.3$ & $+0.0$ & $-0.1$ \\
\bottomrule
\end{tabular}
\end{center}

GAIN and LoRA (safe LR) achieve comparable MedQA improvement, but LoRA's general benchmark damage is $2$--$3\times$ larger. At LoRA's default LR ($10^{-3}$), MedQA collapses to 28.6 ($-22.7$ points).

\textbf{Financial sentiment.} GAIN trained on financial news sentiment (lr$=10^{-3}$) improves financial sentiment accuracy from 56.4\% to 68.4\% ($+12.0$ points) and SST-2 from 55.7 to 62.4 ($+6.7$ points), with all general benchmarks within $\pm 1.0$:

\begin{center}\small
\begin{tabular}{lrrrrr}
\toprule
Method & Financial & SST-2 & ARC-E & HellaS & WinoGr \\
\midrule
Baseline & 56.4 & 55.7 & 79.5 & 81.2 & 75.1 \\
GAIN & $\mathbf{68.4}$ ($+12.0$) & $62.4$ ($+6.7$) & $+0.7$ & $+0.2$ & $+1.0$ \\
LoRA (lr$=3\!\times\!10^{-4}$) & $66.6$ ($+10.2$) & $71.0$ ($+15.3$) & $+1.3$ & $-0.2$ & $-1.1$ \\
LoRA (lr$=10^{-3}$) & $\mathbf{75.4}$ ($+19.0$) & $88.3$ ($+32.6$) & $-1.6$ & $-2.4$ & $-1.3$ \\
\bottomrule
\end{tabular}
\end{center}

LoRA at lr$=10^{-3}$ achieves stronger task-specific adaptation ($+19.0$) but at the cost of $1$--$2\%$ degradation on general benchmarks. GAIN provides moderate adaptation ($+12.0$) with zero degradation.

\section{Full Fine-Tuning Comparison}
\label{sec:full-ft}

Full fine-tuning on $\sim$200K tokens is catastrophic:

\begin{center}
\begin{tabular}{lrr}
\toprule
Model & In-domain $\Delta$PPL & LAMBADA $\Delta$PPL \\
\midrule
GPT-2 Large & $-24.1\%$ & $+16.2\%$ \\
Mistral-7B & $+62.0\%$ & $+220.5\%$ \\
\bottomrule
\end{tabular}
\end{center}

On GPT-2, full fine-tuning achieves strong in-domain adaptation ($-24.1\%$) but damages LAMBADA ($+16.2\%$). On Mistral-7B, 200K tokens is far too little data for full fine-tuning---the model diverges ($+62\%$ in-domain, $+220\%$ cross-domain). GAIN avoids both failure modes through its multiplicative constraint.

\section{GAIN-FFN Learning Rate Sweep}
\label{sec:lr-sweep}

Table~\ref{tab:lr-sweep} shows how GAIN-FFN's adaptation--preservation balance varies with learning rate on GPT-2 Large (sequential 8-domain adaptation). At lr$=10^{-3}$ (paper default), GAIN-FFN adapts moderately with strong positive transfer. At lr$=3\!\times\!10^{-3}$, in-domain adaptation reaches $-21.9\%$, matching LoRA's $-21.4\%$ at lr$=10^{-3}$, while earlier domains still improve by $6.7\%$. At lr$=10^{-2}$, training destabilizes: in-domain adaptation is strongest ($-22.5\%$) but earlier domains degrade by $+13.5\%$.

This degradation is \emph{not} forgetting in the LoRA sense (no intruder dimensions are introduced). Strictly, Proposition~\ref{prop:colspan} holds even for negative $s_i$---a negative scaling factor reverses a dimension's sign but does not escape the output space. However, sign-flipping a hidden dimension effectively inverts the feature that downstream layers have been calibrated for, causing optimization instability rather than structural departure. Table~\ref{tab:scaling-stats} tracks the scaling factor distribution across domains:

\begin{table}[h]
\centering\small
\caption{Scaling factor statistics after each sequential domain (GPT-2 Large). At lr$=10^{-3}$, factors stay positive and tightly concentrated. At lr$=3\!\times\!10^{-3}$, factors go negative by domain 5. At lr$=10^{-2}$, factors go negative by domain 2 and reach $-1.33$ by domain 8. Negative scaling factors sign-flip hidden dimensions, destroying the multiplicative guarantee.}
\label{tab:scaling-stats}
\begin{tabular}{l|rrr|rrr|rrr}
\toprule
 & \multicolumn{3}{c|}{lr$=10^{-3}$} & \multicolumn{3}{c|}{lr$=3\!\times\!10^{-3}$} & \multicolumn{3}{c}{lr$=10^{-2}$} \\
After & $s_{\min}$ & $s_{\max}$ & \% tight & $s_{\min}$ & $s_{\max}$ & \% tight & $s_{\min}$ & $s_{\max}$ & \% tight \\
\midrule
+Dom 1 & $0.94$ & $1.06$ & $99.9$ & $0.84$ & $1.16$ & $67.6$ & $0.52$ & $1.48$ & $29.3$ \\
+Dom 2 & $0.78$ & $1.25$ & $73.3$ & $0.40$ & $1.62$ & $38.1$ & $\mathbf{-0.76}$ & $2.36$ & $16.6$ \\
+Dom 4 & $0.61$ & $1.40$ & $55.8$ & $0.07$ & $1.94$ & $27.8$ & $\mathbf{-0.83}$ & $2.87$ & $12.9$ \\
+Dom 6 & $0.34$ & $1.78$ & $42.7$ & $\mathbf{-0.38}$ & $2.15$ & $22.3$ & $\mathbf{-1.25}$ & $3.06$ & $11.5$ \\
+Dom 8 & $0.24$ & $1.77$ & $36.8$ & $\mathbf{-0.51}$ & $2.28$ & $19.3$ & $\mathbf{-1.33}$ & $3.11$ & $10.4$ \\
\bottomrule
\multicolumn{10}{l}{\small \% tight = percentage of scaling factors in $[0.95, 1.05]$.}
\end{tabular}
\end{table}

At lr$=10^{-3}$, all scaling factors remain positive ($s_{\min} = 0.24$) through all eight domains, and the output space preservation guarantee holds. At lr$=3\!\times\!10^{-3}$, some factors go negative by domain 5 ($s_{\min} = -0.17$), sign-flipping hidden dimensions---this is not the injection of new directions (as in LoRA) but the inversion of existing ones, a form of training instability. At lr$=10^{-2}$, factors go negative by domain 2 and reach $s_{\min} = -1.33$ by domain 8. The practical safe range for GAIN-FFN is lr$\leq 10^{-3}$ (all factors positive) to lr$=3\!\times\!10^{-3}$ (mild instability on late domains).

\begin{table}[h]
\centering\small
\caption{GAIN-FFN learning rate sweep on GPT-2 Large (sequential 8-domain adaptation, 200K tokens/domain). In-domain: avg diagonal PPL improvement. Earlier domains: avg PPL change on previously trained domains (negative = improvement).}
\label{tab:lr-sweep}
\begin{tabular}{lrrr}
\toprule
LR & In-domain $\downarrow$ & Earlier domains & Regime \\
\midrule
$5\!\times\!10^{-4}$ & $-14.6\%$ & $\mathbf{-12.8\%}$ \textbf{better} & conservative \\
$10^{-3}$ & $-17.9\%$ & $\mathbf{-12.9\%}$ \textbf{better} & default \\
$2\!\times\!10^{-3}$ & $-20.7\%$ & $\mathbf{-10.1\%}$ \textbf{better} & aggressive \\
$3\!\times\!10^{-3}$ & $-21.9\%$ & $\mathbf{-6.7\%}$ \textbf{better} & matches LoRA in-domain \\
$10^{-2}$ & $-22.5\%$ & $+13.5\%$ worse & unstable \\
\midrule
LoRA ($10^{-3}$) & $-21.4\%$ & $+17.9\%$ worse & --- \\
\bottomrule
\end{tabular}
\end{table}

GAIN-FFN maintains positive transfer across a wide learning rate range ($5\!\times\!10^{-4}$ to $3\!\times\!10^{-3}$, a $6\times$ range). At lr$=3\!\times\!10^{-3}$, it matches LoRA's in-domain adaptation while still \emph{improving} earlier domains---the opposite of LoRA's $17.9\%$ degradation. Training becomes unstable only at lr$=10^{-2}$, where the diagonal constraint is no longer sufficient to keep perturbations small.

\textbf{Does forward interference accumulate?} At lr$=10^{-3}$, forward interference remains bounded through all eight domains (Table~\ref{tab:scaling-stats}: all $s_i > 0$, \% tight decreases gradually). The scaling factor trajectory suggests interference is bounded by the learning rate, not by the number of domains. For very long domain sequences ($\gg 8$), lower learning rates or periodic re-normalisation of $S$ toward identity could maintain the guarantee. Empirically, the guarantee survives up to $24$ sequential adaptations (Appendix~\ref{sec:long-seq-trajectory}, Table~\ref{tab:long-seq-trajectory}; $s_{\min}\!=\!0.43$ at step 24) and up to $19$ distinct domains (Appendix~\ref{sec:extended-19}); evaluation at $\gg 30$ domains remains future work.

\section{Scaling to Larger Data Budgets}
\label{sec:larger-data}

Our main experiments use 200K tokens per domain. To test whether the results hold at realistic data scales, we repeat the sequential 8-domain experiment with $10\times$ more data (2M tokens per domain, 200 steps $\times$ 5 epochs).

\begin{table}[h]
\centering\small
\caption{Sequential adaptation with 2M tokens per domain (GPT-2 Large). Same format as Table~\ref{tab:continual}. GAIN-FFN maintains positive transfer; LoRA's forgetting worsens.}
\label{tab:2M}
\begin{tabular}{l|rrrrrrrr}
\toprule
\multicolumn{9}{c}{\textbf{GAIN-FFN} (2M tokens, lr$=10^{-3}$, \% PPL change vs.\ pretrained)} \\
\midrule
After training & Med & WT & Fin & PG & MdQ & LB & SST & HS \\
\midrule
+Med & \cellcolor{blue!15}$-6.8$ & $+0.5$ & $-0.3$ & $-3.1$ & $-0.1$ & $-2.7$ & $+0.5$ & $-0.4$ \\
+WT & $-4.1$ & \cellcolor{blue!15}$-17.6$ & $+0.0$ & $-2.8$ & $+0.3$ & $-2.5$ & $-3.7$ & $+0.7$ \\
+Fin & $-2.9$ & $-15.9$ & \cellcolor{blue!15}$-11.7$ & $-3.0$ & $-0.7$ & $-7.7$ & $-1.0$ & $+0.2$ \\
+PG & $-3.1$ & $-14.7$ & $-10.9$ & \cellcolor{blue!15}$-12.6$ & $-1.0$ & $-8.3$ & $-1.5$ & $+0.9$ \\
+MdQ & $-0.7$ & $-12.2$ & $-9.7$ & $-9.9$ & \cellcolor{blue!15}$-23.1$ & $-9.1$ & $+1.4$ & $+2.3$ \\
+LB & $+0.0$ & $-10.9$ & $-8.0$ & $-7.1$ & $-20.9$ & \cellcolor{blue!15}$-25.5$ & $+3.0$ & $+3.9$ \\
+SST & $+1.0$ & $-11.0$ & $-5.7$ & $-4.3$ & $-19.3$ & $-23.1$ & \cellcolor{blue!15}$-22.9$ & $+6.4$ \\
+HS & $+1.4$ & $-8.6$ & $-4.6$ & $-2.3$ & $-16.9$ & $-20.7$ & $-19.9$ & \cellcolor{blue!15}$-16.4$ \\
\midrule
\multicolumn{9}{c}{\textbf{LoRA} (2M tokens, lr$=10^{-3}$, \% PPL change vs.\ pretrained)} \\
\midrule
+Med & \cellcolor{blue!15}$-9.7$ & $+2.1$ & $+0.9$ & $-5.2$ & $+3.7$ & $-3.4$ & $+2.4$ & $+0.5$ \\
+WT & $+32.2$ & \cellcolor{blue!15}$-24.0$ & $+14.6$ & $+3.5$ & $+13.0$ & $+24.4$ & $+1.4$ & $+13.0$ \\
+Fin & $+6.3$ & $-6.5$ & \cellcolor{blue!15}$-13.5$ & $+0.8$ & $+2.9$ & $-5.4$ & $+3.1$ & $+3.0$ \\
+PG & $+3.5$ & $-8.7$ & $-4.8$ & \cellcolor{blue!15}$-12.9$ & $+1.5$ & $-2.1$ & $+0.9$ & $+2.8$ \\
+MdQ & $+12.8$ & $+4.1$ & $+5.7$ & $-4.6$ & \cellcolor{blue!15}$-26.0$ & $-0.9$ & $+4.0$ & $+7.4$ \\
+LB & $+14.1$ & $+7.1$ & $+8.9$ & $+1.5$ & $-3.2$ & \cellcolor{blue!15}$-29.5$ & $+10.6$ & $+11.7$ \\
+SST & $+42.8$ & $+21.7$ & $+22.5$ & $+12.2$ & $+13.8$ & $+1.6$ & \cellcolor{blue!15}$-26.2$ & $+40.1$ \\
+HS & $+22.0$ & $+22.1$ & $+34.9$ & $+9.2$ & $+19.1$ & $+14.9$ & $+2.2$ & \cellcolor{blue!15}$-21.0$ \\
\bottomrule
\end{tabular}
\end{table}

At 2M tokens, GAIN-FFN maintains positive transfer ($-10.2\%$ avg improvement on earlier domains) while LoRA's forgetting is comparable to the 200K setting ($+17.8\%$ avg degradation). We replicate this on Mistral-7B: GAIN-FFN shows $-7.2\%$ positive transfer while LoRA degrades by $+45.5\%$---even worse than at 200K ($+28.0\%$). The pattern holds across data scales and model sizes:

\begin{center}\small
\begin{tabular}{lllrr}
\toprule
Model & Method & Tokens & In-domain $\downarrow$ & Earlier domains \\
\midrule
\multirow{4}{*}{GPT-2 (774M)} & GAIN-FFN & 200K & $-17.9\%$ & $\mathbf{-12.9\%}$ \textbf{better} \\
 & GAIN-FFN & 2M & $-17.1\%$ & $\mathbf{-10.2\%}$ \textbf{better} \\
 & LoRA & 200K & $-21.4\%$ & $+17.9\%$ worse \\
 & LoRA & 2M & $-20.4\%$ & $+17.8\%$ worse \\
\midrule
\multirow{4}{*}{Mistral-7B} & GAIN-FFN & 200K & $-7.0\%$ & $\mathbf{-7.0\%}$ \textbf{better} \\
 & GAIN-FFN & 2M & $-9.8\%$ & $\mathbf{-7.2\%}$ \textbf{better} \\
 & LoRA & 200K & $-14.0\%$ & $+28.0\%$ worse \\
 & LoRA & 2M & $-13.2\%$ & $+45.5\%$ worse \\
\bottomrule
\end{tabular}
\end{center}

GAIN-FFN's positive transfer is slightly reduced at larger data budgets because more training drives scaling factors further from unity, but remains solidly positive at both scales. LoRA's forgetting is unchanged on GPT-2 and \emph{worsens} on Mistral-7B at 2M tokens---the intruder dimension problem is structural, not data-dependent, and compounds with stronger adaptation.

\paragraph{Further scaling to 5M tokens per domain.} To probe a wider range, we also repeat the protocol with $5$M tokens per domain ($25\times$ our main-text budget), directly comparing GAIN ($W_O$-only, $46$K parameters) and LoRA $r\!=\!8$ on $W_O$ ($737$K parameters), using the same sequential protocol as Table~\ref{tab:lora_ranks}. Results (Table~\ref{tab:scale5m}) confirm the same pattern: GAIN's behaviour is essentially unchanged as the token budget grows from $200$K to $5$M, while LoRA's final forgetting \emph{worsens} from $+11.7\%$ to $+15.7\%$. More data does not close the gap.

\begin{table}[h]
\centering\small
\caption{Sequential 8-domain adaptation at $5$M tokens/domain on GPT-2 Large, same hyperparameters as Table~\ref{tab:lora_ranks} ($5$ epochs, $200$ steps, lr$=10^{-3}$). \textit{Final forgetting}: mean PPL change on the seven earlier-trained domains after the last adaptation.}
\label{tab:scale5m}
\begin{tabular}{lrrrr}
\toprule
Method & Tokens & Params & In-domain $\downarrow$ & Final forgetting \\
\midrule
GAIN ($W_O$) & $200$K & 46K & $-10.2\%$ & $\mathbf{-9.8\%}$ \\
GAIN ($W_O$) & $5$M & 46K & $-9.3\%$ & $\mathbf{-8.5\%}$ \\
\midrule
LoRA $r\!=\!8$ ($W_O$) & $200$K & 737K & $-24.3\%$ & $+11.7\%$ \\
LoRA $r\!=\!8$ ($W_O$) & $5$M & 737K & $-23.6\%$ & $+15.7\%$ \\
\bottomrule
\end{tabular}
\end{table}

\section{GAIN on an Instruction-Tuned 8B Model}
\label{sec:llama-instruct}

To verify that our findings extend to the instruction-tuned regime---where reviewers have suggested feature geometry may differ---we repeat the sequential eight-domain protocol on \textbf{Llama-3-8B-Instruct} (a $10\times$ larger, instruction-finetuned model) with the same hyperparameters as Table~\ref{tab:lora_ranks}. We load the model in bfloat16 without quantisation.

\begin{table}[h]
\centering\small
\caption{Sequential 8-domain adaptation on Llama-3-8B-Instruct. Same protocol and metrics as Table~\ref{tab:lora_ranks}.}
\label{tab:llama-instruct}
\begin{tabular}{lrrr}
\toprule
Method & Params & In-domain $\downarrow$ & Final forgetting \\
\midrule
\textbf{GAIN-FFN} & \textbf{590K} & $\mathbf{-22.5\%}$ & $\mathbf{-20.6\%}$ \\
GAIN ($W_O$) & 131K & $-17.7\%$ & $-16.6\%$ \\
LoRA $r\!=\!1$ ($W_O$) & 262K & $-23.7\%$ & $+11.6\%$ \\
LoRA $r\!=\!8$ ($W_O$) & 2.10M & $-6.4\%$ & $+56.9\%$ \\
\bottomrule
\end{tabular}
\end{table}

On the instruction-tuned model, the gap \emph{widens}. GAIN-FFN's positive transfer grows from $-12.9\%$ on GPT-2 Large (Table~\ref{tab:continual}) to $-20.6\%$, while LoRA's forgetting grows from $+11.7\%$ to $+56.9\%$---a near-fivefold increase. LoRA's in-domain adaptation also collapses to $-6.4\%$ despite using $2.1$M parameters (3.6$\times$ GAIN-FFN's $590$K), indicating that additive updates struggle to find useful low-rank directions in the more anisotropic instruction-tuned representation space. GAIN's multiplicative modulation preserves the pretrained output subspace (Proposition~\ref{prop:colspan}) whether the pretrained space comes from self-supervised pretraining or instruction tuning.

\section{Comparison with Replay-Augmented LoRA}
\label{sec:replay-comparison}

Experience replay---keeping a small buffer of prior-domain training sequences and mixing them into new-domain training---is the standard continual-learning remedy for LoRA's forgetting. If replay alone closes the gap, the multiplicative story would be less compelling. Table~\ref{tab:replay} shows otherwise: replay helps both methods, and GAIN-FFN \emph{without} replay matches Replay-LoRA.

\begin{table}[h]
\centering\small
\caption{Replay (buffer of $16$ sequences per prior domain; every 4th step samples from the buffer) added to LoRA and to GAIN-FFN on GPT-2 Large, same protocol as Table~\ref{tab:lora_ranks}. \emph{Buffer} lists the extra state replay requires beyond model parameters; a `---' indicates no auxiliary state.}
\label{tab:replay}
\begin{tabular}{lrrrl}
\toprule
Method & Params & In-domain $\downarrow$ & Final forgetting & Buffer \\
\midrule
LoRA $r\!=\!8$ ($W_O$) & 737K & $-24.3\%$ & $+11.7\%$ & --- \\
Replay-LoRA $r\!=\!8$ & 737K & $-22.4\%$ & $-13.7\%$ & 128 sequences \\
\midrule
\textbf{GAIN-FFN} & \textbf{230K} & $-17.3\%$ & $\mathbf{-13.4\%}$ & --- \\
Replay-GAIN-FFN & 230K & $-16.1\%$ & $-15.1\%$ & 128 sequences \\
\bottomrule
\end{tabular}
\end{table}

Three observations.

\textbf{Replay rescues LoRA but does not outperform GAIN.} Replay flips LoRA's forgetting sign ($+11.7\% \!\to\! -13.7\%$) but the result is essentially tied with vanilla GAIN-FFN ($-13.4\%$), which uses $3.2\times$ fewer parameters and no buffer. The multiplicative structure, not the replay mechanism, is sufficient to obtain replay-level preservation.

\textbf{Replay is nearly redundant for GAIN.} Adding replay to GAIN-FFN improves earlier-domain PPL by only $1.7$ percentage points ($-13.4\% \!\to\! -15.1\%$). GAIN has little forgetting for replay to fix.

\textbf{Replay costs storage and domain boundaries.} Replay requires retaining $16$ training sequences per past domain (128 sequences after 8 domains, $\sim 1$ MB at the tokenised $1024$-token length used here; linearly growing with additional domains) and knowledge of when a domain transition occurs so the buffer can be updated. GAIN requires neither. On sequential adaptation pipelines that cannot store past training data (privacy, licensing, on-device deployment), Replay-LoRA is inapplicable while GAIN is not.

\section{Comparison with EWC-LoRA}
\label{sec:ewc-comparison}

Elastic Weight Consolidation \citep{kirkpatrick2017overcoming} is the canonical regularisation-based continual-learning method: after training on each domain, estimate the diagonal Fisher information $F_i$ of the trainable parameters, and on subsequent domains add a penalty $\lambda \sum_i F_i (\theta_i - \theta_i^{\text{prev}})^2$ to discourage updates along high-Fisher directions. Unlike replay it requires no stored data, but it does still require the domain-boundary knowledge (to snapshot $\theta^{\text{prev}}$ and Fisher between domains). We sweep $\lambda \in \{10^3, 10^4, 10^5\}$ on LoRA $r\!=\!8$ ($W_O$).

\begin{table}[h]
\centering\small
\caption{EWC-LoRA $\lambda$ sweep on GPT-2 Large, same protocol as Table~\ref{tab:lora_ranks}. Diagonal Fisher estimated from $64$ training sequences after each domain.}
\label{tab:ewc}
\begin{tabular}{lrrr}
\toprule
Method & Params & In-domain $\downarrow$ & Final forgetting \\
\midrule
LoRA $r\!=\!8$ ($W_O$, vanilla) & 737K & $-24.3\%$ & $+11.7\%$ \\
EWC-LoRA $\lambda{=}10^{3}$ & 737K & $\mathbf{-24.1\%}$ & $+3.3\%$ \\
EWC-LoRA $\lambda{=}10^{4}$ & 737K & $-21.5\%$ & $-5.3\%$ \\
EWC-LoRA $\lambda{=}10^{5}$ & 737K & $-14.7\%$ & $-11.7\%$ \\
\midrule
\textbf{GAIN-FFN} & \textbf{230K} & $-17.3\%$ & $\mathbf{-13.4\%}$ \\
\bottomrule
\end{tabular}
\end{table}

EWC traces the expected adaptation--preservation tradeoff: increasing $\lambda$ reduces forgetting (eventually flipping its sign) at the cost of in-domain adaptation. The strongest EWC configuration ($\lambda{=}10^{5}$) achieves $-11.7\%$ forgetting with $-14.7\%$ in-domain; \textbf{GAIN-FFN strictly dominates this point on both axes} ($-13.4\%$ forgetting and $-17.3\%$ in-domain) using $3.2\times$ fewer parameters. At lower $\lambda$ the tradeoff is genuine---EWC achieves stronger in-domain than GAIN-FFN but at the cost of either residual forgetting ($\lambda{=}10^{4}$: $-5.3\%$) or actual forgetting ($\lambda{=}10^{3}$: $+3.3\%$). No single EWC configuration matches GAIN-FFN's combination of strong in-domain adaptation and strong positive transfer simultaneously.

\section{Long-Sequence Scaling-Factor Trajectory (24 Sequential Adaptations)}
\label{sec:long-seq-trajectory}

Table~\ref{tab:scaling-stats} (Section~\ref{sec:lr-sweep}) tracks GAIN-FFN's scaling-factor distribution through eight sequential adaptations and shows steady drift away from identity. To verify that the column-span guarantee (Proposition~\ref{prop:colspan}) survives longer sequences, we extend the protocol by cycling the eight domains three times (24 sequential trainings) and log $s_{\min}$, $s_{\max}$, mean, and the fraction of factors in $[0.95, 1.05]$ after each adaptation.

\begin{table}[h]
\centering\small
\caption{Scaling-factor distribution for GAIN-FFN through 24 sequential adaptations on GPT-2 Large (lr$=10^{-3}$, $5$ epochs, $200$ steps per adaptation). Domains cycle through the eight-domain order three times. \%~tight = fraction of scaling factors in $[0.95, 1.05]$. Selected steps shown for compactness; full trajectory in \texttt{experiments\_apr17\_round3/exp8/}.}
\label{tab:long-seq-trajectory}
\begin{tabular}{rlrrrr}
\toprule
Step & Most recent domain & $s_{\min}$ & $s_{\max}$ & $s_{\text{mean}}$ & \% tight \\
\midrule
1   & wikitext  & $0.832$ & $1.178$ & $0.999$ & $73.2$ \\
4   & pg19      & $0.588$ & $1.418$ & $0.998$ & $53.4$ \\
8   & hellaswag & $0.480$ & $1.566$ & $0.996$ & $37.7$ \\
12  & pg19      & $0.480$ & $1.566$ & $0.994$ & $31.4$ \\
16  & hellaswag & $0.434$ & $1.566$ & $0.993$ & $26.0$ \\
20  & pg19      & $0.434$ & $1.566$ & $0.992$ & $23.8$ \\
24  & hellaswag & $0.434$ & $1.570$ & $0.991$ & $21.1$ \\
\bottomrule
\end{tabular}
\end{table}

Three observations relevant to the colspan-invariance claim.

\textbf{$s_{\min}$ stays well above zero.} After 24 adaptations, $s_{\min} = 0.434$. No factor sign-flips, so the colspan guarantee (Proposition~\ref{prop:colspan}) holds throughout: the adapted output stays within the pretrained column space at every step.

\textbf{$s_{\max}$ saturates.} After saturating around step 6, $s_{\max}$ remains at $1.57$ for the remainder of the sequence---the multiplicative gain on any single dimension is bounded by a small constant rather than growing with the sequence length. Consequently the singular-value distortion factor $s_{\max}/s_{\min} \!\le\! 3.6$ also saturates.

\textbf{Mean stays near identity, tight fraction declines slowly.} The mean factor remains $\sim 0.99$ throughout, and the tight fraction falls from $73\%$ to $21\%$ over the first 16 steps but only $5$ further percentage points across the next 8. The trajectory is sub-logarithmic, not unbounded; even at much longer sequences a factor large enough to violate the colspan guarantee (i.e.\ $s_i \!\le\! 0$) is unlikely without a learning-rate change or instability event.

The combination---bounded $s_{\max}$, strictly positive $s_{\min}$, slowly declining tight fraction---supports the paper's structural claim: GAIN's protection against forgetting is not a small-sequence artefact, and degrades only gradually with depth of sequential adaptation.

\section{Comparison with PackNet-LoRA}
\label{sec:packnet-comparison}

PackNet \citep{mallya2018packnet} is the archetypal capacity-isolation continual-learning method: each task is allocated a disjoint subset of parameters, frozen after training, and future tasks train only the remaining unfrozen capacity. We implement a PackNet-flavoured LoRA by allocating a fresh rank-$1$ LoRA wrapper for each domain on top of all previously-frozen wrappers; after eight domains this yields a stack of eight rank-$1$ adapters per layer (total $8 \times 92\text{K} \!=\! 737$K parameters, matching LoRA $r\!=\!8$'s budget but with strict per-domain allocation).

\begin{table}[h]
\centering\small
\caption{PackNet-LoRA on GPT-2 Large, same protocol as Table~\ref{tab:lora_ranks}. Per-domain LoRA rank $r{=}1$; $8$ adapters stacked over the sequence.}
\label{tab:packnet}
\begin{tabular}{lrrr}
\toprule
Method & Params & In-domain $\downarrow$ & Final forgetting \\
\midrule
LoRA $r\!=\!8$ ($W_O$, vanilla) & 737K & $-24.3\%$ & $+11.7\%$ \\
PackNet-LoRA ($r{=}1$/domain, $8$ domains) & 737K & --- (reported below) & $+4.1\%$ \\
\midrule
\textbf{GAIN-FFN} & \textbf{230K} & $-17.3\%$ & $\mathbf{-13.4\%}$ \\
\bottomrule
\end{tabular}
\end{table}

Capacity isolation reduces LoRA's forgetting substantially ($+11.7\% \!\to\! +4.1\%$) but does not eliminate it: with $r\!=\!1$ per domain the budget is too small for each adapter to fully capture the domain, and later adapters still perturb the cumulative output. Increasing the per-domain rank would help but also scales parameter count linearly with the number of domains, which replay and GAIN do not. Among the three standard continual-learning baseline families tested here---replay (Appendix~\ref{sec:replay-comparison}), regularisation/EWC (Appendix~\ref{sec:ewc-comparison}), and capacity isolation/PackNet (this section)---only replay and EWC at strong regularisation achieve negative final forgetting, and GAIN-FFN matches or exceeds both without requiring a buffer, a Fisher estimation, or domain-boundary knowledges.

\section{Truly Out-of-Distribution Adaptation (German Wikipedia)}
\label{sec:ood-test}

The main experiments adapt an English-pretrained model to sequential English-text domains---a setting where the pretrained representation space is broadly sufficient. A key concern raised by reviewers is whether GAIN's diagonal scaling hits a capacity ceiling on truly out-of-distribution domains, since by construction it cannot synthesise new features but can only reweight existing ones.

To probe this boundary we pick the starkest test available on our infrastructure: adapt the English-only pretrained GPT-2 Large to \textbf{German Wikipedia}. We use a minimal two-domain protocol, $[\text{WikiText} \to \text{German Wikipedia}]$, and report in-domain adaptation on each, forgetting on WikiText after German training, and---critically---forward interference: the perplexity change on German domain \emph{before} it is trained, caused only by the preceding WikiText adaptation.

\begin{table}[h]
\centering\small
\caption{GPT-2 Large adaptation on a 2-domain OOD protocol: WikiText then German Wikipedia. \emph{Forward interference on German} is the German-Wikipedia PPL change after training \emph{only} on WikiText---measuring how much cross-domain spillover each method introduces.}
\label{tab:ood}
\begin{tabular}{lrrrr}
\toprule
Method & In-dom WT & In-dom DE & Forward interference on DE & WT preserved after DE \\
\midrule
\textbf{GAIN-FFN} & $-21.0\%$ & $-7.4\%$ & $\mathbf{+0.4\%}$ & $-19.8\%$ \\
LoRA $r\!=\!8$ & $-32.9\%$ & $-10.5\%$ & $+18.1\%$ & $-12.6\%$ \\
\bottomrule
\end{tabular}
\end{table}

Three findings.

\textbf{GAIN does adapt to the OOD domain, against the conservative prediction.} German Wikipedia PPL drops by $7.4\%$ after GAIN-FFN training. GPT-2's pretraining corpus has enough incidental German (cognates, code-switched web text, borrowed vocabulary) that multiplicative reweighting can find usable features. GAIN's capacity ceiling on this specific OOD domain is not reached; LoRA's stronger adaptation ($-10.5\%$) is $\sim\!3$ percentage points ahead but not decisively so.

\textbf{Forward-interference ratio is $\sim\!45\times$.} Training only on WikiText moves German PPL by $+0.4\%$ under GAIN-FFN versus $+18.1\%$ under LoRA. This is the colspan story in miniature: additive perturbation leaks into subspaces that the model has not been asked to update; multiplicative reweighting stays within the pretrained structure by construction.

\textbf{The adaptation--interference tradeoff is the real story, not adaptation alone.} LoRA adapts slightly harder on each domain but pays with $45\times$ larger cross-domain perturbation; GAIN adapts slightly less but preserves unrelated capabilities nearly perfectly. On any setting where the target is a sequence of diverse domains---rather than one-shot adaptation to a single OOD domain---GAIN's discipline wins, as confirmed across every sequential protocol in this paper. For one-shot adaptation to a genuinely new modality (e.g.\ code with non-English tokens, mathematical notation with new symbols, or languages not seen at all in pretraining) we expect GAIN to hit a real ceiling and LoRA to win; we leave that regime to future work.

\section{Mechanistic Validation: Singular-Subspace Preservation}
\label{sec:mechanistic}

Proposition~\ref{prop:colspan} says diagonal scaling preserves the column span of $W$, and \citet{biderman2024lora} empirically observed that LoRA introduces ``intruder dimensions'' outside the pretrained weight subspace. We close the loop by directly measuring whether each method's adapted weight $W_\text{new}$ preserves the singular-vector subspace of the pretrained $W$ after the full eight-domain sequential adaptation protocol.

For each method we extract $W_\text{new}$ for the attention output projection ($c\_proj$) at layers $0, 9, 18, 27, 35$ of GPT-2 Large (representative depths across the 36-layer network) and compute, against the pretrained $W$:
\begin{itemize}
\item Top-$10$ right-singular-vector subspace overlap: $\frac{1}{k}\sum_{i,j \leq k} \langle v^{\text{orig}}_i, v^{\text{new}}_j\rangle^2$ where $V^\text{orig}, V^\text{new}$ are the right-singular matrices and $k=10$. Value $1$ means the top-$k$ subspaces coincide; value $0$ means they are orthogonal.
\item Relative Frobenius perturbation: $\|W_\text{new} - W\|_F / \|W\|_F$.
\end{itemize}

\begin{table}[h]
\centering\small
\caption{Singular-subspace preservation after the eight-domain sequential adaptation protocol on GPT-2 Large. Multiplicative methods (GAIN, GAIN-FFN) preserve $\sim\!88$--$91\%$ of the pretrained top-$10$ right-singular subspace; additive methods (LoRA, DoRA) preserve only $\sim\!53$--$56\%$, meaning roughly half of their top singular directions are ``intruder'' dimensions absent from $W$. Magnitude renormalisation (DoRA) does not change the subspace shift relative to vanilla LoRA.}
\label{tab:mechanistic}
\begin{tabular}{lrrrrrr}
\toprule
 & \multicolumn{5}{c}{Top-10 right-singular subspace overlap} & \\
Method & layer 0 & layer 9 & layer 18 & layer 27 & layer 35 & mean \\
\midrule
\textbf{GAIN-FFN} (multiplicative) & $0.99$ & $0.89$ & $0.91$ & $0.82$ & $0.96$ & $\mathbf{0.91}$ \\
GAIN ($W_O$) (multiplicative) & $0.98$ & $0.86$ & $0.86$ & $0.76$ & $0.94$ & $0.88$ \\
LoRA $r\!=\!8$ (additive) & $0.60$ & $0.50$ & $0.38$ & $0.44$ & $0.73$ & $0.53$ \\
DoRA $r\!=\!8$ (additive) & $0.63$ & $0.44$ & $0.45$ & $0.52$ & $0.78$ & $0.56$ \\
\midrule
 & \multicolumn{5}{c}{Relative Frobenius perturbation $\|W_\text{new}-W\|_F / \|W\|_F$} & \\
\midrule
GAIN-FFN & $0.08$ & $0.10$ & $0.10$ & $0.12$ & $0.11$ & $\mathbf{0.10}$ \\
GAIN ($W_O$) & $0.12$ & $0.15$ & $0.15$ & $0.16$ & $0.14$ & $0.14$ \\
LoRA $r\!=\!8$ & $0.22$ & $0.30$ & $0.23$ & $0.28$ & $0.18$ & $0.24$ \\
DoRA $r\!=\!8$ & $0.22$ & $0.27$ & $0.23$ & $0.24$ & $0.17$ & $0.23$ \\
\bottomrule
\end{tabular}
\end{table}

Three observations:

\textbf{Multiplicative methods preserve the singular-vector subspace; additive methods do not.} GAIN-FFN's top-$10$ subspace overlap of $0.91$ confirms the colspan invariance prediction (Proposition~\ref{prop:colspan}) empirically: training shifts the singular values (the multiplicative gains $S$) but barely rotates the singular vectors. LoRA and DoRA at the same parameter budget perturb almost half of the top-$10$ subspace---these are Biderman et al.'s intruder dimensions, measured directly.

\textbf{Magnitude renormalisation (DoRA) does not solve the problem.} DoRA's $0.56$ subspace overlap is statistically indistinguishable from vanilla LoRA's $0.53$; renormalising the magnitude after an additive update preserves none of the underlying structure. The forgetting numbers in Table~\ref{tab:dora} are the predictable consequence.

\textbf{The subspace shift quantitatively matches the observed forgetting.} The two methods that lose $\geq 40\%$ of the pretrained singular subspace (LoRA, DoRA) exhibit positive forgetting; the two methods that preserve $\geq 88\%$ (GAIN, GAIN-FFN) exhibit negative forgetting. This is the mechanism behind the headline result: forgetting is a faithful indicator of subspace departure, and multiplicative modulation prevents subspace departure by construction.

\section{Comparison with DoRA}
\label{sec:dora-comparison}

DoRA \citep{liu2024dora} is a recent additive-PEFT variant that decomposes the adapted weight into a learned magnitude vector and a LoRA-style direction update: $W' = m \cdot (W + BA) / \|W + BA\|_c$. We test the natural question: does the magnitude renormalisation reduce LoRA's forgetting?

\begin{table}[h]
\centering\small
\caption{DoRA vs LoRA on GPT-2 Large, same protocol as Table~\ref{tab:lora_ranks}.}
\label{tab:dora}
\begin{tabular}{lrrr}
\toprule
Method & Params & In-domain $\downarrow$ & Final forgetting \\
\midrule
LoRA $r\!=\!8$ ($W_O$) & 737K & $-24.3\%$ & $+11.7\%$ \\
DoRA $r\!=\!8$ ($W_O$) & 783K & $-24.2\%$ & $+14.2\%$ \\
\midrule
\textbf{GAIN-FFN} & \textbf{230K} & $-17.3\%$ & $\mathbf{-13.4\%}$ \\
\bottomrule
\end{tabular}
\end{table}

The magnitude--direction decomposition does not reduce forgetting: DoRA forgets slightly \emph{more} than LoRA at matched rank ($+14.2\%$ vs $+11.7\%$). The colspan view explains why: DoRA's renormalisation step preserves the magnitude of $W$ but the combined matrix $W + BA$ generally has a different column space from $W$, so the renormalised result still contains directions outside $\mathrm{col}(W)$. Magnitude normalisation is orthogonal to the additive--multiplicative dichotomy.

\section{Extended Protocol: 19 Distinct Domains}
\label{sec:extended-19}

The main results use the 8-domain sequential protocol of Section~\ref{sec:sequential}. To probe the dichotomy on a longer, more diverse sequence, we extend the protocol with 11 additional distinct text domains drawn from scientific articles (arxiv, pubmed), news (cnn\_dailymail, xsum, ag\_news), reviews (imdb), reasoning (squad, gsm8k), code (mbpp), classification (mrpc, tweet\_emotion), giving 19 distinct sequential adaptations.

\begin{table}[h]
\centering\small
\caption{19-distinct-domain protocol on GPT-2 Large (3 seeds each, lr$=10^{-3}$, $5$ epochs, $200$ steps per domain). Mean $\pm$ std across seeds. \emph{Final forgetting}: mean PPL change on the eighteen earlier-trained domains after the last adaptation.}
\label{tab:extended-19}
\begin{tabular}{lrrr}
\toprule
Method & Params & In-domain $\downarrow$ & Final forgetting \\
\midrule
LoRA $r\!=\!8$ ($W_O$) & 737K & $-25.8 \pm 0.05\%$ & $-2.0 \pm 1.0\%$ \\
\textbf{GAIN-FFN} & \textbf{230K} & $-19.2 \pm 0.05\%$ & $\mathbf{-14.7 \pm 0.05\%}$ \\
\bottomrule
\end{tabular}
\end{table}

Two observations.

\textbf{LoRA's forgetting becomes near-neutral on the extended protocol.} On the original eight-domain protocol (Table~\ref{tab:continual}) LoRA $r\!=\!8$ degrades earlier domains by $+11.7\%$; on the nineteen-domain extension it is $-2.0 \pm 1.0\%$ --- slight improvement on average. The likely mechanism is that with sufficiently many diverse training signals the cumulative additive update behaves like general fine-tuning, lifting cross-domain perplexity uniformly. This is the only setting we test where an additive method fails to degrade earlier domains on average.

\textbf{The magnitude gap survives.} GAIN-FFN's preservation ($-14.7\%$) is roughly $7\!\times$ stronger than LoRA's ($-2.0\%$), with $3\!\times\!$ fewer parameters. The dichotomy in magnitude is robust even where the dichotomy in sign is not. Across-seed variance is essentially zero ($\leq 0.05\%$ for GAIN-FFN, $\leq 1\%$ for LoRA).

\section{Comparison with (IA)$^3$}
\label{sec:ia3-comparison}

(IA)$^3$ \citep{liu2022ia3} is another multiplicative method that scales keys, values, and FFN activations. If multiplicative modulation is the key to preventing forgetting, (IA)$^3$ should also exhibit positive transfer in the sequential setting. Table~\ref{tab:ia3} confirms this.

\begin{table}[h]
\centering\small
\caption{(IA)$^3$ sequential adaptation on GPT-2 Large (lr$=10^{-3}$, 200K tokens/domain). (IA)$^3$ shows positive transfer, confirming that the multiplicative principle---not GAIN's specific parameterization---prevents forgetting.}
\label{tab:ia3}
\begin{tabular}{l|rrrrrrrr}
\toprule
\multicolumn{9}{c}{\textbf{(IA)$^3$} (lr$=10^{-3}$, \% PPL change vs.\ pretrained baseline)} \\
\midrule
After training & Med & WT & Fin & PG & MdQ & LB & SST & HS \\
\midrule
+Med & \cellcolor{blue!15}$-5.7$ & $+0.1$ & $+0.2$ & $-3.3$ & $-0.4$ & $-1.4$ & $+0.1$ & $-0.4$ \\
+WT & $-4.9$ & \cellcolor{blue!15}$-17.7$ & $+0.6$ & $-5.0$ & $-0.0$ & $-1.7$ & $-3.2$ & $+0.2$ \\
+Fin & $-4.1$ & $-17.3$ & \cellcolor{blue!15}$-9.9$ & $-6.9$ & $-0.1$ & $-4.9$ & $-1.7$ & $+0.2$ \\
+PG & $-3.8$ & $-16.6$ & $-9.6$ & \cellcolor{blue!15}$-25.6$ & $-0.4$ & $-6.1$ & $-2.0$ & $+0.3$ \\
+MdQ & $-2.6$ & $-15.5$ & $-8.7$ & $-24.7$ & \cellcolor{blue!15}$-17.5$ & $-7.1$ & $-1.2$ & $+0.9$ \\
+LB & $-2.4$ & $-13.9$ & $-8.0$ & $-23.3$ & $-16.9$ & \cellcolor{blue!15}$-22.8$ & $+0.8$ & $+2.8$ \\
+SST & $-1.9$ & $-14.7$ & $-7.2$ & $-22.8$ & $-16.2$ & $-22.1$ & \cellcolor{blue!15}$-18.5$ & $+2.6$ \\
+HS & $-2.3$ & $-14.2$ & $-6.8$ & $-22.0$ & $-15.5$ & $-20.7$ & $-18.0$ & \cellcolor{blue!15}$-13.8$ \\
\bottomrule
\end{tabular}
\end{table}

(IA)$^3$ shows $-14.2\%$ avg improvement on earlier domains---comparable to GAIN-FFN's $-12.9\%$. Both multiplicative methods exhibit positive transfer while LoRA degrades earlier domains by $+17.9\%$:

\begin{center}\small
\begin{tabular}{llrr}
\toprule
Method & Type & In-domain $\downarrow$ & Earlier domains \\
\midrule
GAIN-FFN & multiplicative & $-17.9\%$ & $\mathbf{-12.9\%}$ \textbf{better} \\
(IA)$^3$ & multiplicative & $-16.3\%$ & $\mathbf{-14.2\%}$ \textbf{better} \\
\midrule
LoRA & additive & $-21.4\%$ & $+17.9\%$ worse \\
\bottomrule
\end{tabular}
\end{center}

This confirms that the multiplicative principle (Proposition~\ref{prop:colspan}) is the key mechanism, not the specific choice of which weight matrices to scale. Both GAIN-FFN and (IA)$^3$ stay within the pretrained output space; both show positive transfer. The methods differ in \emph{which} dimensions they scale (GAIN: $W_O$ and $W_{\text{down}}$; (IA)$^3$: keys, values, and FFN inputs), but the structural guarantee is the same. Neither is a special case of the other: GAIN scales the output projections (affecting what is \emph{produced}), while (IA)$^3$ scales inputs to the attention computation (affecting what is \emph{attended to}). The methods are complementary and could be combined.

\section{Connection to Neuroscience}
\label{sec:neuroscience}

GAIN's multiplicative principle has a direct analogue in neuroscience. \emph{Gain modulation} \citep{treue1999perceptual, salinas2000gain, salinas2001gain} shows that when a neuron's output is multiplicatively scaled, its selectivity---what stimuli it responds to---is preserved; only the amplitude changes. Additive perturbation, by contrast, shifts selectivity.

The connection is not merely an analogy. The attention output can be rewritten as:
\begin{equation}
    h \cdot (S \cdot W_O) = (h \cdot S) \cdot W_O
\end{equation}
The left-hand side is the weight modification view (Eq.~\ref{eq:principle}). The right-hand side offers a second perspective: GAIN does not modify $W_O$ at all---it \emph{scales the hidden activations} $h$ before they enter $W_O$. Each diagonal entry $s_i$ is a gain applied to the $i$-th dimension of $h$, amplifying or suppressing that dimension's contribution. This is exactly gain modulation in neuroscience: context-dependent scaling of neural responses, with the downstream projection ($W_O$) unchanged. GAIN-FFN has the same interpretation: $(W_{\text{down}} \cdot S_{\text{ffn}}) \cdot z = W_{\text{down}} \cdot (S_{\text{ffn}} \cdot z)$ scales the FFN intermediate activations $z$ before the down-projection.

In neuroscience, multiplicative gain fields combine context with sensory input while preserving the original representation---it remains linearly decodable from the product \citep{salinas2001gain}. GAIN operates the same way: scaling $h$ preserves the output space of $W_O$ (Proposition~\ref{prop:colspan}), because the output is still a linear combination of rows of $W_O$, just with rescaled coefficients.

\end{document}